\definecolor{darkblue}{rgb}{0.0, 0.0, 0.55}
\definecolor{myblue}{rgb}{0,0.45,0.74}
\definecolor{myred}{rgb}{0.85,0.33,0.1}
\newtheorem{theorem}{Theorem}[section]
\newtheorem{definition}[theorem]{Definition}
\newtheorem{remark}[theorem]{Remark}
\providecommand{\norm}[1]{\left\lVert#1\right\rVert}
\providecommand{\R}{\mathbb{R}} %
\providecommand{\derive}[2]{\frac{{\partial}{#1}}{\partial #2} }  %
\providecommand{\uu}{\mathbf{u}}
\providecommand{\xx}{\mathbf{x}}
\providecommand{\zz}{\mathbf{z}}
\providecommand{\mB}{\mathbf{B}}
\providecommand{\mD}{\mathbf{D}}
\providecommand{\mI}{\mathbf{I}}
\providecommand{\mL}{\mathbf{L}}
\providecommand{\mO}{\mathbf{O}}
\providecommand{\mP}{\mathbf{P}}
\providecommand{\mT}{\mathbf{T}}
\providecommand{\mU}{\mathbf{U}}
\providecommand{\mZ}{\mathbf{Z}}
\providecommand{\mbeta}{\boldsymbol{\beta}}
\providecommand{\cC}{\mathcal{C}}
\providecommand{\cD}{\mathcal{D}}
\providecommand{\cE}{\mathcal{E}}
\providecommand{\cL}{\mathcal{L}}
\providecommand{\cS}{\mathcal{S}}
\newcommand{\ind}{\perp\!\!\!\!\perp}
\newenvironment{talign*}
{\csname align*\endcsname}
{\endalign}
\newcommand*{\algrule}[1][\algorithmicindent]{\makebox[#1][l]{\hspace*{.5em}\thealgruleextra\vrule height \thealgruleheight depth \thealgruledepth}}%
\newcommand*{\thealgruleextra}{}
\newcommand*{\thealgruleheight}{.75\baselineskip}
\newcommand*{\thealgruledepth}{.25\baselineskip}
\def\ALG@printindent{%
	\ifnum \theALG@nested>0%
	\ifx\ALG@text\ALG@x@notext%
	\else
		\unskip
		\addvspace{-1pt}%
		\ALG@printindent@tempcnta=1
		\loop
		\algrule[\csname ALG@ind@\the\ALG@printindent@tempcnta\endcsname]%
		\advance \ALG@printindent@tempcnta 1
		\ifnum \ALG@printindent@tempcnta<\numexpr\theALG@nested+1\relax%
			\repeat
		\fi
	\fi
}%
\patchcmd{\ALG@doentity}{\noindent\hskip\ALG@tlm}{\ALG@printindent}{}{\errmessage{failed to patch}}
\newbox\statebox
\newcommand{\myState}[1]{%
	\setbox\statebox=\vbox{#1}%
	\edef\thealgruleheight{\dimexpr \the\ht\statebox+1pt\relax}%
	\edef\thealgruledepth{\dimexpr \the\dp\statebox+1pt\relax}%
	\ifdim\thealgruleheight<.75\baselineskip
		\def\thealgruleheight{\dimexpr .75\baselineskip+1pt\relax}%
	\fi
	\ifdim\thealgruledepth<.25\baselineskip
		\def\thealgruledepth{\dimexpr .25\baselineskip+1pt\relax}%
	\fi
	\State #1%
	\def\thealgruleheight{\dimexpr .75\baselineskip+1pt\relax}%
	\def\thealgruledepth{\dimexpr .25\baselineskip+1pt\relax}%
}
\newcommand{\rmnum}[1]{\romannumeral #1}
\newcommand{\iid}{i.i.d.\ }
\newcommand{\algfed}{Client2Vec\xspace} 
\newcommand{\algnet}{DSA-IGN\xspace}
\title{\algfed: Improving Federated Learning by Distribution Shifts Aware Client Indexing} 
\author{Yongxin Guo$^{1,2}$, Lin Wang$^{1,2}$, Xiaoying Tang$^{1,2,3}$, Tao Lin$^{4,5}$ \\
$^1$School of Science and Engineering, The Chinese University of Hong Kong, Shenzhen, \\
Guangdong, 518172, P.R. China \\
$^2$The Shenzhen Institute of Artificial Intelligence and Robotics for Society \\
$^3$The Guangdong Provincial Key Laboratory of Future Networks of Intelligence\\
$^4$Research Center for Industries of the Future, Westlake University\\
$^5$School of Engineering, Westlake University\\
}
\begin{document}
\maketitle

\begin{abstract}
    Federated Learning (FL) is a privacy-preserving distributed machine learning paradigm. Nonetheless, the substantial distribution shifts among clients pose a considerable challenge to the performance of current FL algorithms. To mitigate this challenge, various methods have been proposed to enhance the FL training process.

This paper endeavors to tackle the issue of data heterogeneity from another perspective---by improving FL algorithms prior to the actual training stage. Specifically, we introduce the Client2Vec mechanism, which generates a unique client index for each client before the commencement of FL training. Subsequently, we leverage the generated client index to enhance the subsequent FL training process. To demonstrate the effectiveness of the proposed Client2Vec method, we conduct three case studies that assess the impact of the client index on the FL training process. These case studies encompass enhanced client sampling, model aggregation, and local training. Extensive experiments conducted on diverse datasets and model architectures show the efficacy of Client2Vec across all three case studies. Our code is avaliable at \url{https://github.com/LINs-lab/client2vec}.
\end{abstract}

\section{Introduction}

Federated Learning (FL) is an emerging machine learning paradigm that preserves clients' privacy by only exchanging model parameters between clients and server, and maintains the local data not exchanged.
As the de facto algorithm in FL, FedAvg~\citep{mcmahan2016communication} proposes to use local SGD to improve the communication efficiency of FL.
However, the non-\iid nature of local distributions significantly reduces the performance of FL algorithms~\citep{lin2020dont,karimireddy2020scaffold,li2018federated}.
Despite the great success of existing methods in addressing the non-\iid problem in FL~\citep{li2021model,acar2020federated}, most existing studies center on the training process of FL by improving the key stages of the FL training, such as client sampling~\citep{fraboni2021clustered,luo2022tackling,wang2023delta}, model aggregation~\citep{wang2019federated,lin2020ensemble,chen2023elastic}, and local training~\citep{li2018federated,li2021model}.

An additional line of research in FL aims to find efficient methods to improve the performance before the training stage.
Yet only a limited number of works exist, either utilizing dataset distillation before the FL training~\citep{yang2023fedfed}, or generating global shared synthetic pseudo-data~\citep{guo2023fedbr,tang2022virtual}.
Despite promising, these approaches incur additional computation costs on local devices with a limited number of applicable scenarios and are incompatible with other FL training stages like client sampling and model aggregation.

Taking inspiration from the Word2Vec technique in Natural Language Processing (NLP) tasks~\citep{mikolov2013efficient} and domain indexing in Domain Generalization tasks~\citep{xu2022domain}, we introduce a novel mechanism below, namely \algfed.
\algfed generates an index vector for each client, serving as their identity by incorporating information about the client's local data distribution.
These vectors are used to measure label and feature distribution shifts among clients, seamlessly integrate into the FL training pipeline, and allows efficiently (1) operating independently of the FL training process, (2) combining with existing FL methods, while imposing a minimal additional computational load on local devices, 
and (3) enhancing 
FL training performance throughout all stages.

Our contributions can be summarized as follows:
\begin{itemize}[leftmargin=12pt,nosep]
    \item We explore a novel mechanism 
    in FL called \algfed, which involves creating an index vector for each client before the FL training phase. These vectors incorporate information about the local distribution of clients and subsequently enhance the FL training process.

    \item We present the Distribution Shifts Aware Index Generation Network (\algnet), a network specifically designed to generate the client index prior to FL training. Our visualization results demonstrate the effectiveness of the client index in measuring the similarities in clients' local distributions. 

    \item We conduct three case studies, including client sampling, model aggregation, and local training, to illustrate the potential and effectiveness of the generated client index. Our experiments, conducted on various datasets and model architectures, consistently demonstrate significant performance improvements with the use of \algfed.
\end{itemize}

\section{Related Works}

\paragraph{Information sharing in FL.}
Federated Learning (FL) is a distributed training methodology where local data is retained and not exchanged between the central server and clients~\citep{li2018federated,karimireddy2020scaffold,karimireddy2020mime,guo2023fedbr,jiang2023test}. FedAvg~\citep{mcmahan2016communication,lin2020dont}, a foundational algorithm, uses local Stochastic Gradient Descent (local SGD) to reduce communication.  Nevertheless, the performance of FL algorithms is substantially impeded by distribution shifts among clients. 
To address distribution shift in FL, existing works share local distribution statistics~\citep{shin2020xor,zhou2022fedfa}, data representations~\citep{hao2021towards,tan2022fedproto}, and prediction logits~\citep{chang2019cronus,luo2021no}. FedMix~\citep{yoon2020fedmix} and FedBR~\citep{guo2023fedbr} enhance local training with privacy-protected augmentation data. VHL~\citep{tang2022virtual} employs randomly initialized generative models to produce virtual data, regularizing local features to closely align with those of same-class virtual data. FedFed~\citep{yang2023fedfed} proposes a dataset distillation method, amalgamating distilled datasets into all clients' local datasets to mitigate distribution shifts. Compared to existing methods, \algfed has the following advantages: (1) decouples index generation from FL training, reducing FL training load; (2) generates one client index per client, improving efficiency; (3) contributes to the entire FL training process, including client sampling, model aggregation, and local training.

\paragraph{Domain indexing.}
Domain Generalization (DG) tackles cross-domain generalization by generating domain-invariant features. While conventional DG methods strive to make a data point's latent representation independent of its domain identity using a one-hot vector~\citep{ganin2016domain, tzeng2017adversarial, zhao2017learning}, recent studies propose using real-value scalars or vectors as domain indices to improve performance~\citep{wang2020continuously, xu2021graph}. However, obtaining domain indices may be impractical. To address this, \citet{xu2022domain} introduced variational domain indexing (VDI) to infer domain indices without prior knowledge.
Yet, challenges arise when applying VDI to FL due to communication costs, privacy concerns, and neglect of label shifts.
Further discussions on related works can be found in Appendix~\ref{sec:related-works-appendix}.

\begin{figure*}[!t]
    \centering
    \includegraphics[width=.85\textwidth]{./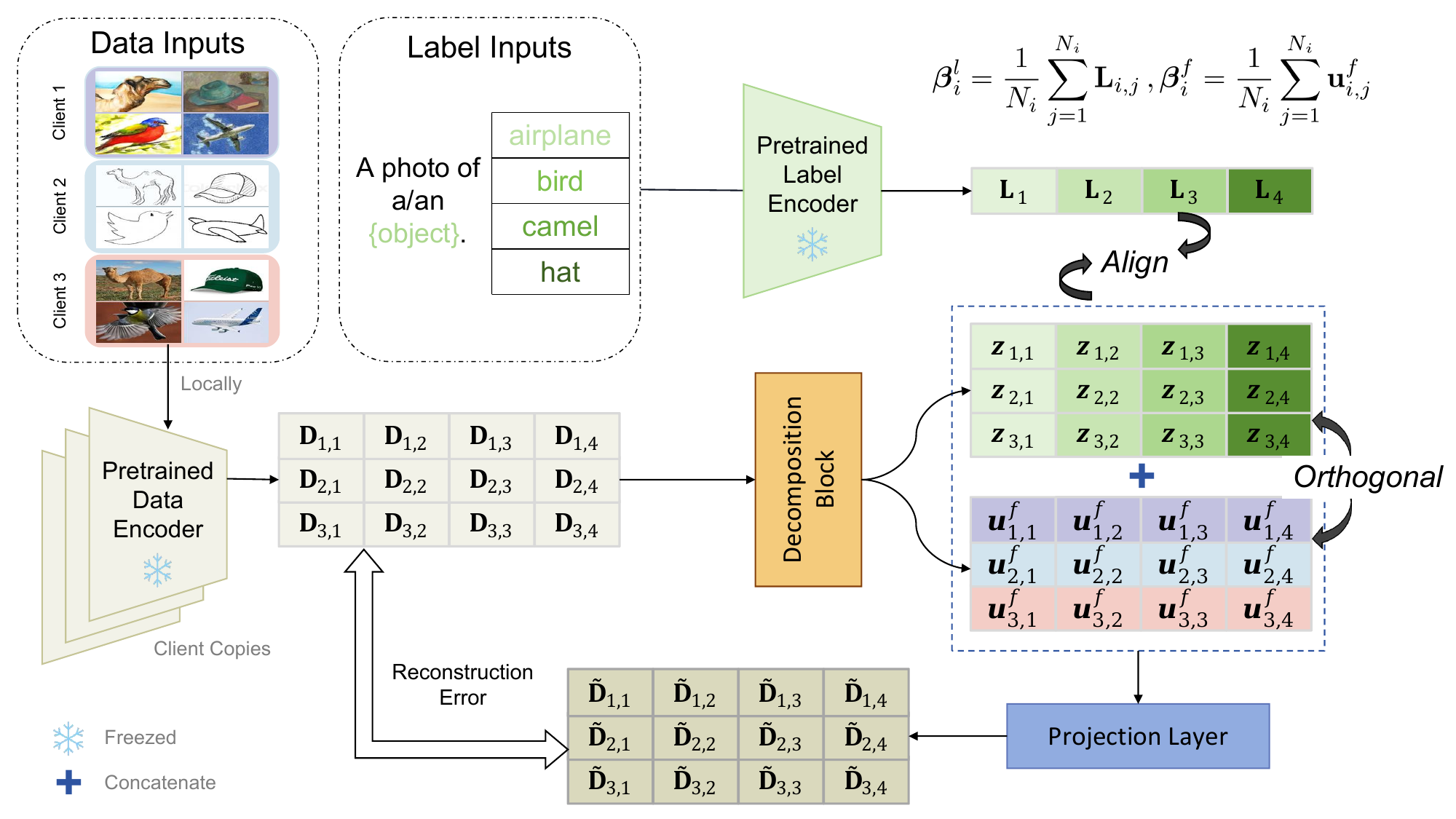}
    \caption{\textbf{Illustration of the \algnet Workflow:} The local data from clients, denoted as $(\xx_{i,j}, y_{i,j})$, undergo encoding by the CLIP encoders, resulting in the transformation to $(\mD_{i,j}, \mL_{i,j})$ before the index generation process. The CLIP image embedding $\mD_{i,j}$ is then split into a data encoding $\zz_{i,j}$ and a sample feature index $\uu_{i,j}^{f}$. The $\zz_{i,j}$ and $\uu_{i,j}^{f}$ are then concatenated and projected to $\tilde{\mD}_{i,j}$ to reconstruct $\mD_{i,j}$.  Lastly, client label index $\mbeta_i^{l}$ and client feature index $\mbeta_i^{f}$ are obtained by averaging $\mL_{i,j}$ and $\uu_{i,j}$, respectively.}
    \label{fig:index-network-overview}
\end{figure*}

\section{\algfed: Distribution Shifts Aware Client Indexing}
In this section, we introduce \algfed, a mechanism that generates an index vector for each client, representing their identity and incorporating information about their local distribution. The client index, which considers both label 
and feature distribution shifts, is defined in Section~\ref{sec:client-index-definition}. We present the Distribution Shifts Aware Index Generation Network (\algnet) in Section~\ref{sec:index-generation-network}, which generates the client index based on the specified criteria. Visualization examples of the generated client index are provided in Section~\ref{sec:visualization-client-index}.

\subsection{Client Index}
\label{sec:client-index-definition}
We consider the FL setting with $M$ clients, where each client $i$ possesses $N_i$ data samples. The $j$-th data sample of client $i$ is represented as $(\xx_{i,j}, y_{i,j})$.

\paragraph{Sample index and Client index.}
To ensure that the client index conveys information about all data samples within the client, we first generate the sample index $\uu_{i,j}$ as the index vector for the data sample $(\xx_{i,j}, y_{i,j})$.
The client index $\mbeta_i$ is then computed as the average of all data samples for client $i$: $\mbeta_i = \frac{1}{N_i} \sum_{j=1}^{N_i} \uu_{i,j}$.

For FL scenarios where feature and label shifts occur simultaneously, the sample index $\uu_{i,j}$ consists of two parts: sample feature index $\uu_{i,j}^{f} \in \R^{d_i}$ and sample label index $\uu_{i,j}^{l} \in \R^{d_i}$, encoding the feature and label information of the data sample $(\xx_{i,j}, y_{i,j})$, respectively.
Similarly, the client index $\mbeta_i$ is represented as $\mbeta_i = [\mbeta_i^{f}; \mbeta_i^{l}] \in \R^{2d_i}$, where $\mbeta_i^{f} \in \R^{d_i}$ is the client feature index, and $\mbeta_i^{l} \in \R^{d_i}$ is the client label index.

However, obtaining client and sample indices may not always be trivial in practice.
To address this, we extend the domain index idea in~\citet{xu2022domain} and define the expected properties of sample index $\uu_{i,j}$ and client index $\mbeta_i$ below.

\begin{definition}[Sample Index]
    \label{def:client-index}
    Given the data sample $(\xx_{i,j}, y_{i,j})$ and its corresponding encoding $\zz_{i,j}$, which encode information that can be used to predict $y_{i,j}$, the sample index $\uu_{i,j}$ of data sample $(\xx_{i,j}, y_{i,j})$ is expected to satisfy the following properties:
    \begin{itemize}[leftmargin=12pt,nosep]
        \item \textbf{Independence between $\uu_{i,j}^{f}$ and $\zz_{i,j}$.}
              Sample feature index $\uu_{i,j}^{f}$ is independent of client-invariant data encoding $\zz_{i,j}$.
              This aims to encourage the sample feature index $\uu_{i,j}^{f}$ to encode the client-dependent information---specifically, the distinct information about the client's local distribution, and unrelated to label prediction.
        \item \textbf{Maximizing information in $\uu_{i,j}^{l}$ and $\zz_{i,j}$ for label prediction.}
              The data encoding $\zz_{i,j}$ and sample label index $\uu_{i,j}^{l}$ should contain as much information as possible to predict label $y$.
        \item \textbf{Information Preservation of $\uu_{i,j}^{f}$ and $\zz_{i,j}$.}
              Data encoding $\zz_{i,j}$ and sample feature index $\uu_{i,j}^{f}$ preserves as much information as possible to recover data $\xx_{i,j}$.
    \end{itemize}
\end{definition}

\begin{remark}
    Different from the definitions of domain index outlined in \citet{xu2022domain} (refer to Definition~\ref{def:domain-idex}), Definition~\ref{def:client-index} encompasses both label and feature distribution shifts by introducing the sample label index $\uu_{i,j}^{l}$.
    Additionally, generating domain-level index in Definition~\ref{def:domain-idex} requires to gather all sample indices $\uu_{i,j}$ from various data sources.
    We simplify this procedure and calculate the client index as $\mbeta_i \!=\! \frac{1}{N_i} \sum_{j=1}^{N_i} \uu_{i,j}$.
\end{remark}

\subsection{Generating Client Index}
\label{sec:index-generation-network}
In this section, we describe the generation of $\mbeta_i$ and $\uu_{i,j}$ based on Definition~\ref{def:client-index}, as depicted in Figure~\ref{fig:index-network-overview}. We use image datasets for clarity, and details for language datasets can be found in Appendix~\ref{sec:language-workflow-appendix}.

\paragraph{Encoding data using CLIP.}
Since the client index, denoted as $\beta_i$, is computed as an average across sample indices $\uu_{i,j}$, the primary challenge in generating the client index lies in generating these sample indices $\uu_{i,j}$.
According to Definition~\ref{def:client-index}, we can observe that the sample label index $\uu_{i,j}^{l}$ solely encodes label information, while the sample feature index encodes image feature information independently of label information.
Consequently, to generate the sample index, we must devise a method that maps label information and image information into the same space, facilitating the extraction of label-dependent sample label index $\uu_{i,j}^{l}$ and label-independent sample feature index $\uu_{i,j}^{f}$.

We propose to leverage CLIP~\citep{radford2021learning} to ease the index generation process.
CLIP is a pre-trained cross-modality model that contains an image encoder and a text encoder, aligning image and text embedding.
As shown in Figure~\ref{fig:index-network-overview}, for a given input image-label pairs $(\xx_{i,j}, y_{i,j})$, we utilize a CLIP image encoder to produce image embedding $\mD_{i,j}$ and a text encoder to generate label embedding $\mL_{i,j}$:
\begin{itemize}[leftmargin=12pt,nosep]
    \item \textit{Label Embedding $\mL_{i,j}$:}
          The $\mL_{i,j}$ only encodes label descriptions such as ``A photo of a/an \{object\}''.
          Therefore, we use $\mL_{i,j}$ as the embedding that only contains label information, which is naturally invariant among clients.
          
    \item \textit{Image embedding $\mD_{i,j}$:}
          The $\mD_{i,j}$ is extracted from the whole image, serving as a compact feature containing both client-independent (label information) and client-specific (background, style, etc.) features.
\end{itemize}
Consequently, the original local dataset $\cD_i = \{ (\xx_{i,j}, y_{i,j})\}$ is transformed into the CLIP embedding set $\cE_i = \{ (\mD_{i,j}, \mL_{i,j})\}$.
The $\cE_i$ is then utilized to generate the index.
Note that other cross-modality models, such as BLIP~\citep{li2022blip} and BLIP2~\citep{li2023blip}, can also align vision and language like CLIP.
Exploring the effectiveness of these models could be a valuable future research direction.

\paragraph{Generating sample indices using CLIP embedding and \algnet.}
Given that $\mL_{i,j}$ only encodes label information, we directly set $\uu_{i,j}^{l} = \mL_{i,j}$.
On the contrary, by Definition~\ref{def:client-index}, the sample feature index $\uu_{i,j}^{f}$ needs to encode label-invariant client-specific information.
Thus, we decompose $\mD_{i,j}$---which contains both label and client-specific information---to generate $\uu_{i,j}^{f}$ while isolating the client-specific information from $\mD_{i,j}$.

The entire process of generating the sample feature index is achieved by training the Distribution Shifts Aware Generation Network (\algnet).
The training process of \algnet includes three components, corresponding to the three rules in Definition~\ref{def:client-index}:
\begin{itemize}[leftmargin=12pt,nosep]
    \item \textit{Decompose CLIP image embedding $\mD_{i,j}$}.
          We decompose $\mD_{i,j}$ to sample feature index $\uu_{i,j}^{f}$ and data encoding $\zz_{i,j}$. Regularization is applied to ensure orthogonality between $\uu_{i,j}^{f}$ and $\zz_{i,j}$, ensuring their independence.
          The decomposition block can be any non-linear neural network architecture, and we use a three-layer transformer encoder as the decomposition block for the sake of simplicity\footnote{
              Each transformer encoder layer will have 8 attention heads and the dimension of the model is 32.
              More details about the network architectures of \algnet can be found in Appendix~\ref{sec:experiment-settings-appendix}.
          }.
          
    \item \textit{Aligning the data encoding $\zz_{i,j}$ and $\mL_{i,j}$ to ensure label sensitivity of $\zz_{i,j}$}.
          We force $\zz_{i,j}$ to have a large cosine similarity with the label embedding $\mL_{i,j}$, ensuring that $\zz_{i,j}$ encodes as much information as possible to predict the label $y_{i,j}$.
    \item \textit{Reconstruct CLIP image embedding $\mD_{i,j}$.}
          To ensure that $\uu_{i,j}^{f}$ and $\zz_{i,j}$ retain all the information from the CLIP image embedding $\mD_{i,j}$, we utilize $\uu_{i,j}^{f}$ and $\zz_{i,j}$ for the purpose of reconstructing $\mD_{i,j}$.
          In detail, we begin by concatenating $\uu_{i,j}^{f}$ and $\zz_{i,j}$, followed by projecting the resultant vector onto $\tilde{\mD}{i,j}$, and subsequently minimizing the distance between the reconstructed embedding $\tilde{\mD}{i,j}$ and the original CLIP image embedding $\mD_{i,j}$.
          Experiments on various projection layer architectures
          can be found in Appendix~\ref{sec:without-diversity-loss}.
          
\end{itemize}

\paragraph{Objective functions of \algnet.}
We define the following objective function:

\begin{small}
    \begin{align*}
         & \cL(\uu_{i,j}^{f}, \zz_{i,j}, \mD_{i,j}, \mL_{i,j})
        =\underbrace{\cL_{\text{div}} (\uu_{i,j}^{f})}_{\text{Stable Training}}
         + \underbrace{\cL_{\text{sim}}(\zz_{i,j}, \mL_{i,j}) + \cL_{\text{orth}} (\uu_{i,j}^{f}, \zz_{i,j}) \nonumber + \cL_{\text{recon}} (\uu_{i,j}^{f}, \zz_{i,j}, \mD_{i,j})}_{\text{Following three components}}  \,,
    \end{align*}
\end{small}%
where we use $\cL_{\text{sim}}$, $\cL_{\text{orth}}$, and $\cL_{\text{recon}}$, corresponding to three components, to generate sample feature indexes as defined in Definition~\ref{def:client-index}.
Additionally, we introduce $\cL_{\text{div}}$ to improve training stability.
In detail,
\begin{itemize}[leftmargin=12pt,nosep]
    \item \textit{$\cL_{\text{sim}}$ ensures label sensitivity for $\zz_{i,j}$.}
          It is defined as $\cL_{sim}(\zz_{i,j}, \mL_{i,j}) = 1 - \text{cosine similarity} (\zz_{i,j}, \mL_{i,j})$, promoting a high cosine similarity between $\zz_{i,j}$ and $\mL_{i,j}$.
    \item \textit{$\cL_{\text{orth}}$ guarantees independence between $\uu_{i,j}^{f}$ and $\zz_{i,j}$.} It is defined as $\cL_{\text{orth}} = \norm{\mZ \mU^{T}}1$, where $\mZ = [\zz_{i,j}]^{T}$ and $\mU = [\uu_{i,j}^{f}]^{T}$.
    \item \textit{$\cL_{\text{recon}}$ for information preservation.}
          It is defined as the mean squared distance between the reconstructed embedding $\tilde{\mD}_{i,j}$ and the original CLIP image embedding $\mD_{i,j}$.
    \item $\cL_{\text{div}}$ is introduced to ensure stable training outcomes by promoting diversity in $\uu_{ij}^{f}$ across different samples within the same batch. This is important because insufficient training epochs can result in identical $\uu_{ij}^{f}$ values across all data samples. $\cL_{\text{div}} = \frac{1}{B} \sum_{j=1}^{B} \log (\sum_{k \neq j} \exp(\text{cos-sim} (\uu_{i,j}^{f}, \uu_{i,k}^{f}) ))$ is designed to promise diversity between $\uu_{i,j}^{f}$, and it is similar in concept to SimCLR~\citep{chen2020simple}, focusing on negative pairs. $B$ is the batch-size here.
\end{itemize}

\begin{wrapfigure}{r}{.45\textwidth}
    \centering
    \includegraphics[width=.45\textwidth]{./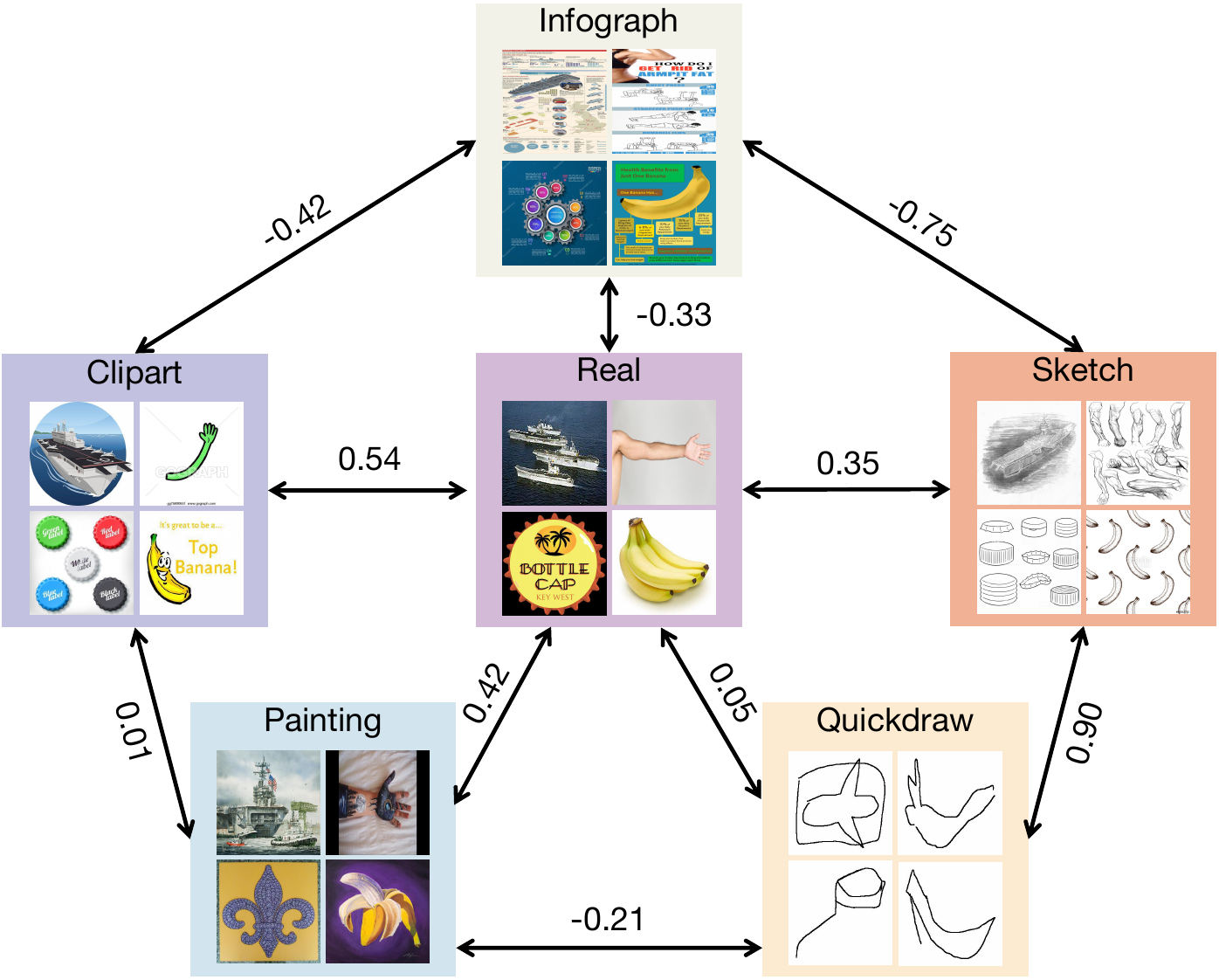}
    \caption{\textbf{Illustration of feature index similarities between different domains.} We present an analysis of cos-similarities across various domains. The results are acquired employing the \textsc{Global} training strategy. \looseness=-1}
    \label{fig:index-visualization}
\end{wrapfigure}

\paragraph{Optimizing \algfed.}
We consider two training strategies, namely \textsc{Global} and \textsc{Federated} explained below: 
\begin{itemize}[leftmargin=12pt,nosep]
    \item \textsc{Global}: Each client uploads one batch (128 samples) of $(\mD_{i,j}, \mL_{i,j})$ pairs to the server.
          The server trains the \algnet using the collected pairs from all clients and then sends the trained \algnet to clients for generating client index.
    \item \textsc{Federated}: The \algnet is trained using all clients' local data through FedAvg.
          In each communication round, the server randomly selects 10\% of clients, and the local epoch number is set to 10.
\end{itemize}

\subsection{Visualization Examples of Client Index}
\label{sec:visualization-client-index}
In this section, we visualize some examples of the generated client index on the DomainNet dataset~\citep{peng2019moment}.
DomainNet contains data from 6 different domains, where data belonging to different domains have significant feature shifts.
We chose 50 out of 345 available classes, with each domain randomly divided into 10 clients.
Then the 6 domains will result in a total of 60 clients.
Clients 0 to 9 correspond to the first feature domain, clients 10 to 19 to the second, and so forth, with clients 51 to 59 representing the last feature domain.
Clients from various domains experience feature shifts and possess varying sample sizes due to differences in the number of samples within each domain.

\begin{figure}[!t]
    \centering
    \subfigure[ Client Index $_\textsc{Global}$]{
        \includegraphics[width=.225\textwidth]{./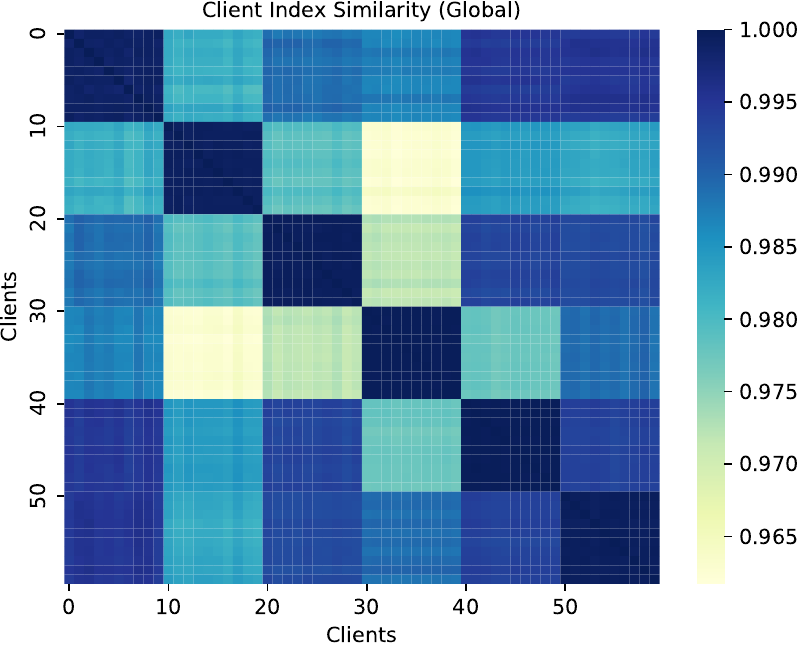}
    }
    \subfigure[Client Index $_\textsc{Federated}$]{
        \includegraphics[width=.225\textwidth]{./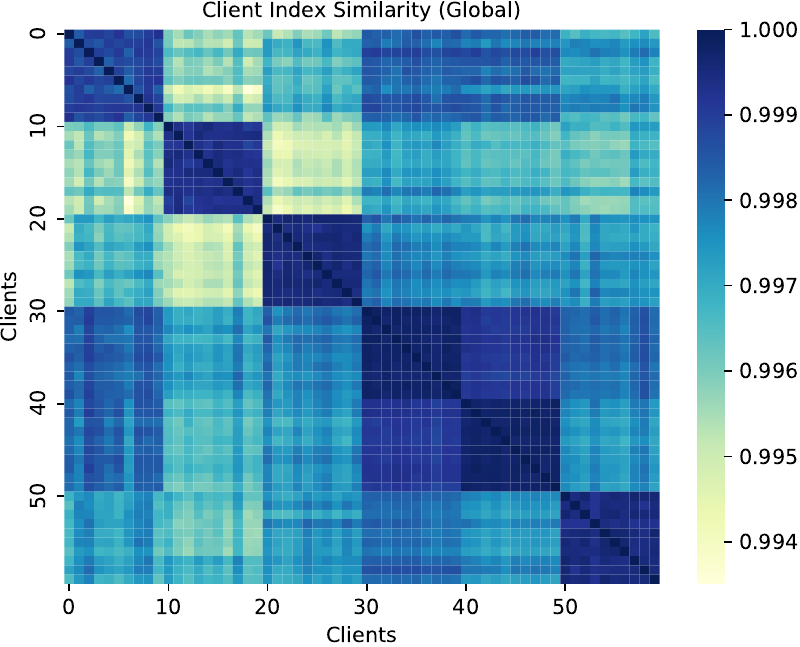}
    }
    \subfigure[ Feature Index$_\textsc{Global}$]{
        \includegraphics[width=.225\textwidth]{./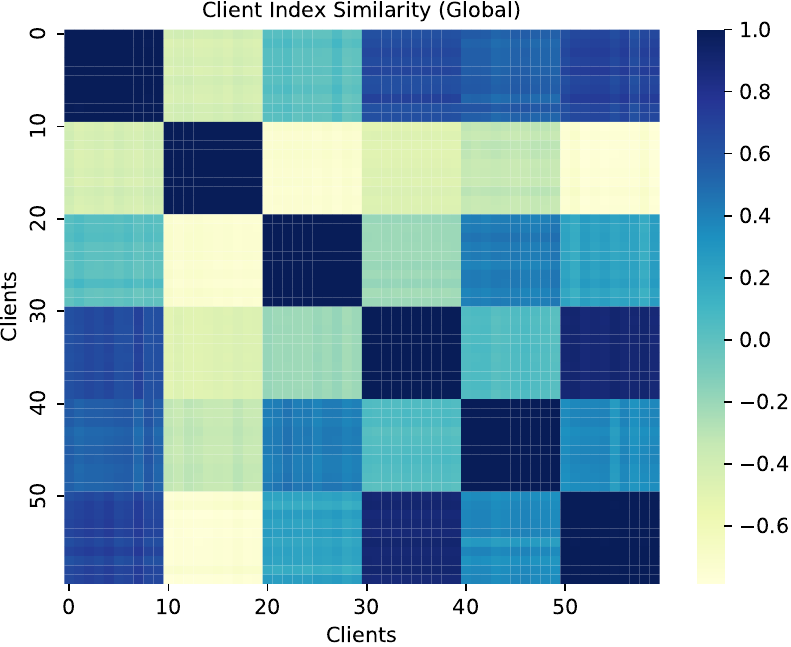}
    }
    \subfigure[ Feature Index $_\textsc{Federated}$]{
        \includegraphics[width=.225\textwidth]{./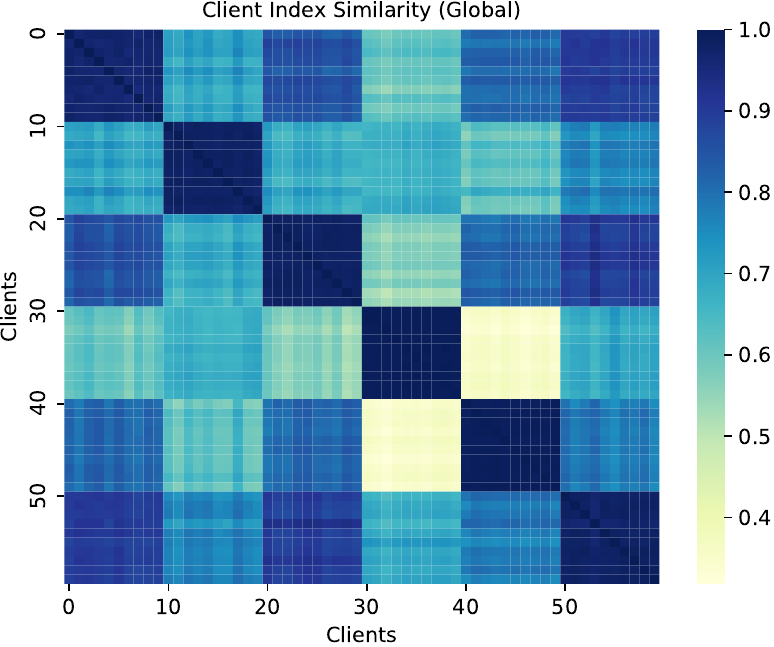}
    }
    \caption{\textbf{Visualization of index similarities between clients.} We illustrate the similarities of client index $\mbeta_i$ and client feature index $\mbeta_{i}^{f}$ between clients. Results including both \textsc{Global} and \textsc{Federated} training strategies are reported. Ideally, clients in the same domain should share a similar client index, resulting in dark diagonal blocks.}
    \label{fig:index-similarity-heatmap}
\end{figure}

\paragraph{Similarity of client indices among different clients.}
As the distribution shifts among clients mainly come from the feature shifts among domains, in Figure~\ref{fig:index-similarity-heatmap} we depict the similarities of the client index $\mbeta_i$ and the client feature index $\mbeta_i^{f}$ across clients, employing both \textsc{Global} and \textsc{Federated} training strategies.
We have the following observations:
\begin{itemize}[leftmargin=12pt,nosep]
    \item \textit{Clients sharing the same feature domain show similar client indices.}
          We observe the similarities between client index $\mbeta_i$ and client feature index $\mbeta_i^{f}$ approach $1.0$ for clients within the same feature domain.
          Conversely, clients belonging to different feature domains have large distances regarding the client feature index $\mbeta_i^{f}$.
          This highlights the effectiveness of our method in learning meaningful information about clients' local distribution.
    \item \textit{Both the \textsc{Global} and \textsc{Federated} training strategies produce client indices that encode meaningful information about client distribution.}
          We note that both the \textsc{Global} and \textsc{Federated} training strategies can generate similar client indices for clients with the same feature domain.
          Moreover, the indices generated by the \textsc{Global} strategy exhibit a more clear boundary among domains. 
\end{itemize}

\paragraph{Inter-Domain Similarity Assessment.}
Utilizing the feature index $\mbeta_i^{f}$ for clients, we quantify similarity across different domains.
Figure~\ref{fig:index-visualization} illustrates the average cosine similarities of client feature index $\mbeta_i^{f}$ between clients belonging to different domains.
The results align with human intuitions, with the ``Real'' domain showing greater proximity to ``Clipart'', ``Painting'', and ``Sketch'', while exhibiting significant differences from ``Infograph'' and ``Quickdraw''. 
These findings validate the effectiveness of our generated client index.

\section{Improving FL via \algfed}
\label{sec:case-studies}

In this section, we showcase improving the FL training process by leveraging the trained client index $\mbeta_i$. Specifically, we explore three case studies aimed at refining crucial aspects of the FL training pipeline: client sampling, model aggregation, and local training.

\paragraph{Case study 1: improved client sampling.}
The client index, $\mbeta_i$, is a natural metric for measuring distance between clients.
Motivated by the theoretical findings in CyCP~\citep{cho2023convergence} and empirical observations in class incremental learning~\citep{he2022rethinking}---where a smaller distance among client groups sampled in adjacent rounds improves performance---we propose a greedy sampling approach.

In round $t$, let $\cC^{t-1}$ be the set of clients selected in round $t - 1$. Clients with greater similarity to those in $\cC^{t-1}$ will have a higher probability of being selected.
Specifically, the sampling probability for client $i$ is calculated as:
\begin{small}
    \begin{align}
        p_i^{t} = \frac{\exp(\text{S} (\mbeta_i, \cC^{t-1}) / \tau)}{\sum_{j=1}^{N} \exp(\text{S} (\mbeta_j, \cC^{t-1}) / \tau)} \,.
        \label{equ:sample-weights}
    \end{align}
\end{small}%
Here, $\tau$ is the hyper-parameter controlling the sampling distribution shape, and the similarity function $S$ is defined as:
\begin{small}
    \begin{align*}
        \text{S} (\mbeta_i, \cC^{t-1}) = \frac{1}{2 N^{t-1}} \sum_{j=1}^{|\cC^{t-1}|} N_{j} \left(
        \text{sim} (\mbeta_{i}^{f}, \mbeta_{j}^{f}) + \text{sim} (\mbeta_{i}^{l}, \mbeta_{j}^{l})
        \right) \, ,
    \end{align*}
\end{small}%
where $N^{t-1} = \sum_{j \in \cC^{t-1}} N_j$, and $N_j$ is the number of samples of client $j$.
The ``$\text{sim}$'' refers to cosine similarity.
In practical implementation, to avoid resampling identical clients, those already sampled within $\frac{M}{2 |\cC^{t-1}|}$ rounds will have $p_i^{t}$ set to $0$, where $M$ is the total number of clients.

\paragraph{Case study 2: improved model aggregation.}
The enhanced model aggregation strategy follows the same idea as client sampling.
In detail, we assign higher aggregation weights to clients with greater similarity to those in previous rounds.
To achieve this, motivated by the Multiplicative Weight Update algorithm (MWU)~\cite{arora2012multiplicative}, we define the following optimization problem for deriving the aggregation weights $p_{i,g}^{t}$.
\begin{small}
    \begin{align}
        \label{eq:case2_opt_problem}
        \begin{split}
            \textstyle
            & \max_{p_{i,g}^{t}} \cL_{agg} =  \underbrace{\sum_{i \in \cC^{t}} p_{i,g}^{t} \left( \sum_{\tau=1}^{t} \gamma^{t - \tau} \text{S} (\beta_i, \cC^{\tau}) \right)}_{A_1 \text{:= profit function term}} 
            + \underbrace{\lambda_1 \sum_{i \in \cC^{t}} p_{i,g}^{t} \log \frac{q_i^{t}}{p_{i,g}^{t}}}_{A_2 \text{:= entropy term}}
            + \underbrace{\lambda_0 (\sum_{i \in \cC^{t}} p_{i,g}^{t} - 1)}_{A_3 \text{:= regularization term}} \,,
        \end{split}
    \end{align}
\end{small}%
where $\gamma$ is a hyper-parameter controlling the weights of historical information.
$A_1$ denotes the profit function in the MWU algorithm, encouraging higher aggregation probability for clients with greater similarity.
$A_2$ represents the entropy term, where $q_i^{t} = \nicefrac{N_i}{N^{t}}$ denotes the prior distribution of aggregation probability~\citep{li2018federated,balakrishnan2021diverse}.
$A_2$ evaluates the risk associated with the aggregation process. $A_3$ serves as a regularization term, ensuring the total aggregation weights sum to 1.

In order to solve \eqref{eq:case2_opt_problem}, we derive the aggregation weights $p_{i,g}^{t}$ on the communication round $t$ below, and defer the corresponding proof to Appendix~\ref{sec:proof-aggregation-weights}:
\begin{small}
    \begin{align}
        p_{i,g}^{t} = \frac{q_i^{t} \exp \left( \frac{1}{\lambda_1} \sum_{\tau=1}^{t} \gamma^{t - \tau} \text{S} (\beta_i, \cC^{\tau})
        \right)}{\sum_{j \in \cC^{t}} q_j^{t} \exp \left( \frac{1}{\lambda_1} \sum_{\tau=1}^{t} \gamma^{t - \tau} \text{S} (\beta_j, \cC^{\tau}) \right)}  \,,
        \label{equ:agg-weights}
    \end{align}
\end{small}%
where $\lambda_1$ denotes the heat parameter, controlling the entropy term's strength.
A higher $\lambda_1$ value emphasizes entropy, resulting in a more evenly distributed set of aggregation weights $p_{i,g}^{t}$.
$\lambda_0$ is tuning-free, as demonstrated in Appendix~\ref{sec:proof-aggregation-weights}, fixing $\sum_{i \in \cC^{t}} p_{i,g}^{t} = 1$ results in a constant value for $\lambda_0$, which subsequently disappears from the Eq~\eqref{equ:agg-weights}.

\paragraph{Case study 3: improved local training.}
According to Definition~\ref{def:client-index}, the generated client feature index $\mbeta_{i}^{f}$ contains client-specific information unrelated to label information.
Thus, to promote label sensitivity in local features, the local features should be independent of the client feature index $\mbeta_i^{f}$.
To ensure this, we design the subsequent local objective function to enforce orthogonality between the trained local features $\zz_{i,j}$ and the client feature index $\mbeta_{i}^{f}$ for any client $i$.
\begin{small}
    \begin{align}
        \textstyle
         & \cL(\xx, y) = \cL_{\text{cls}} (\xx, y)
        + \cL_{\text{orth}} (\zz_P, \mB^{f}) + \cL_{\text{dist}} (\zz, \zz_P) \,,
        \label{equ:local-reg}
    \end{align}
\end{small}%
where $\mB^{f} = [\mbeta_{1}^{f}, \cdots, \mbeta_{N}^{f}]$, $\zz$ represents local features, and $\zz_P \in \R^{d_{i}}$ is the projected feature.
To handle dimension mismatches between local features $\zz_{i,j}$ and client feature indices $\mbeta_{i}^{f}$, we initially project $\zz \in \R^{d}$ to $\zz_P \in \R^{d_{i}}$ using a \textit{trainable} matrix $\mP \in \R^{d \times d_{i}}$ (see Figure~\ref{fig:local-reg-workflow}).
An orthogonal loss term $\cL_{\text{orth}} = \norm{\zz_P \mB^{f}}_1$ further encourages orthogonality between $\zz_P$ and $\mB^{f}$.

To preserve maximum information from the original feature $\zz$ in $\zz_P$, we introduce an additional distillation loss term, denoted as $\cL_{dist}$.
This term regulates the Kullback-Leibler divergence between the prediction logits of $\zz$ and that of $\zz_P$.

\begin{wrapfigure}{r}{.45\textwidth}
    \centering
    \includegraphics[width=.45\textwidth]{./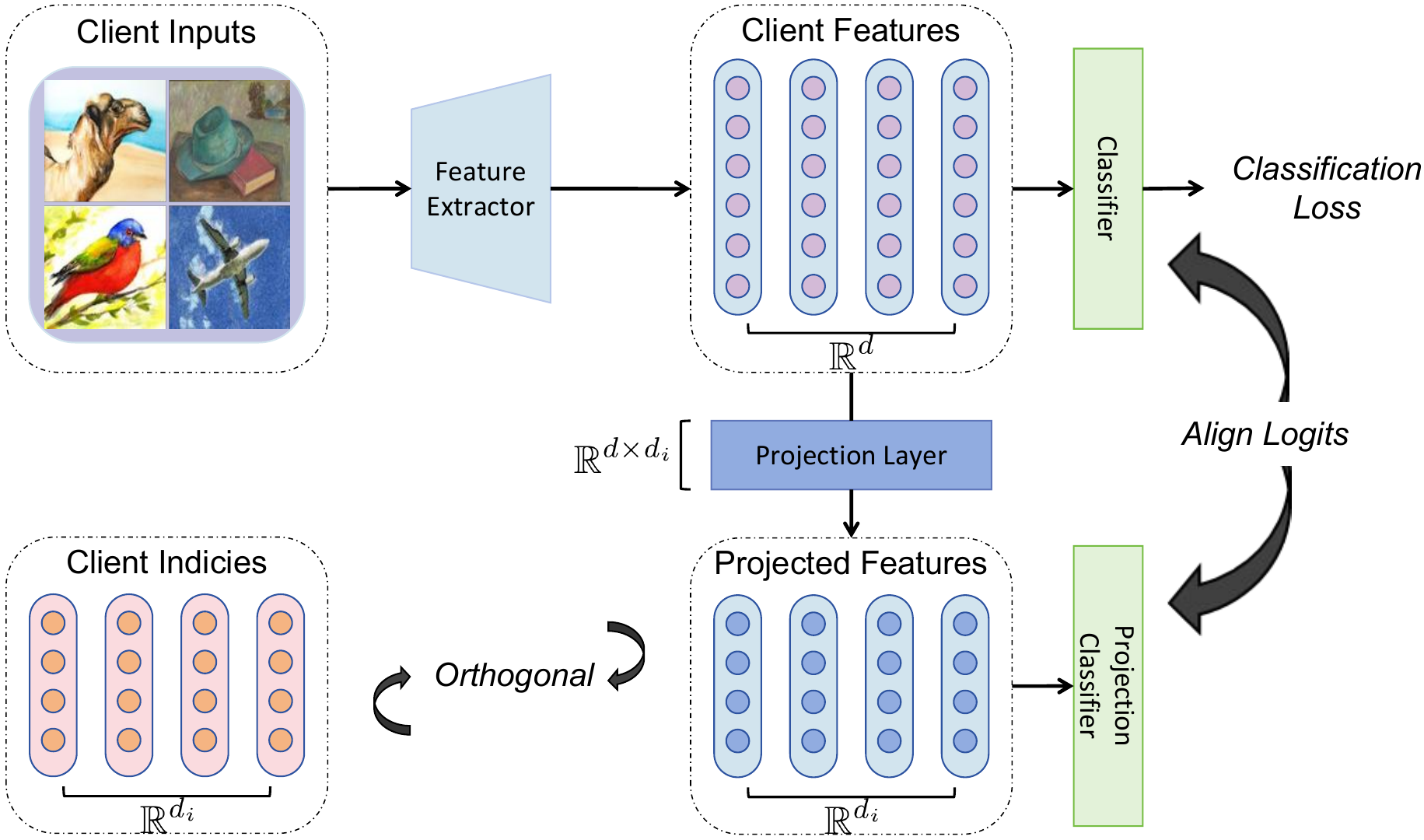}
    \caption{\small
        \textbf{Workflow of the improved local training (case study 3).}
        The projection layer is to project client features to the same dimension with client feature index $\mbeta_{i}^{f}$, and the projection classifier is to ensure the projected features and the original client features contain similar information. 
    }
    \label{fig:local-reg-workflow}
\end{wrapfigure}

\section{Experiments}

In this section, we investigate if the \algfed and the proposed three case studies can improve the FL algorithm performance. More detailed experiment settings and additional experimental results can be found in Appendix~\ref{sec:experiments-appendix}.

\subsection{Experiment Settings}

\paragraph{Datasets and models.}
In this paper, we utilize three datasets: Shakespeare, CIFAR10, and DomainNet. Shakespeare's partition uses LEAF benchmark's method~\citep{caldas2018leaf}. CIFAR10 is partitioned into 100 clients using Latent Dirichlet Allocation (LDA)~\citep{yurochkin2019bayesian,hsu2019measuring} with $\alpha = 0.1$ for label distribution shifts. DomainNet randomly selects 50 classes from the total 345 and divides sub-datasets into 10 clients per domain, resulting in a total of 60 clients. Further dataset partition details are available in Appendix~\ref{sec:experiment-settings-appendix}.
We randomly select 10\% of clients in each of the 100 communication rounds, with a fixed number of 5 local epochs. Additional hyper-parameter details can be found in Appendix~\ref{sec:experiment-settings-appendix}.

\paragraph{Baseline Algorithms.} We have selected widely recognized FL baselines for our study, encompassing established methods such as FedAvg~\citep{mcmahan2016communication}, FedAvgM~\citep{hsu2019measuring}, FedDyn~\citep{acar2021federated}, and Moon~\citep{li2021model}. Additionally, we have incorporated more recently introduced baselines, namely FedLC~\citep{zhang2022federated} and FedIIR~\citep{guo2023out}. It is important to note that FedIIR is specifically tailored for addressing feature shift tasks and, as a result, its evaluation is limited to the DomainNet dataset.

\begin{table*}[!t]
    \centering
    \caption{\small
        \textbf{Performance improvement of \algfed.} We assess the performance improvement achieved by employing our proposed three case studies across diverse datasets, neural architectures, and baseline algorithms. Each experiment comprises 100 communication rounds, with the number of local epochs set to 5. We measure the average test accuracy of all clients in each communication round and report the best performance attained across all rounds. The results are then averaged over three seeds. The \rmnum{1}~indicates improved client sampling, \rmnum{2}~indicates the improved model aggregation, and \rmnum{3}~indicates the improved local training. The MI indicates the maximum improvement of \algfed over baselines. The weight of the local regularization term in \rmnum{3}~is set to $1.0$ for \textsc{Federated} strategy, and $5.0$ for \textsc{Global} strategy.
    }
    \resizebox{1.\textwidth}{!}{
        \begin{tabular}{c c c c c c c c c c c c c c c}
            \toprule
            \multirow{2}{*}{Datasets}   &
            \multirow{2}{*}{Algorithms} & Original & \multicolumn{3}{c}{\algfed (\textsc{Federated})} & \multicolumn{3}{c}{\algfed (\textsc{Global})} & MI                                                                                                                                                                                                                                                                            \\
            \cmidrule(lr){3-3} \cmidrule(lr){4-6} \cmidrule(lr){7-9}  \cmidrule(lr){10-10}
                                        &          & -                                                & + \rmnum{1}                                   & + \rmnum{1} + \rmnum{2}                      & + \rmnum{1} + \rmnum{2} + \rmnum{3}                   & + \rmnum{1}                                           & + \rmnum{1} + \rmnum{2}                      & + \rmnum{1} + \rmnum{2} + \rmnum{3}                   & -       \\
            \midrule
            \multirow{5}{*}{\makecell{Shakespeare                                                                                                                                                                                                                                                                                                                                                                                     \\ (LSTM)}}
                                        & FedAvg   & $49.93$ \small{\transparent{0.5} $\pm 0.12$}     & $50.33$ \small{\transparent{0.5} $\pm 0.03$}  & $50.28$ \small{\transparent{0.5} $\pm 0.04$} & $\mathbf{50.51}$ \small{\transparent{0.5} $\pm 0.10$} & $50.30$ \small{\transparent{0.5} $\pm 0.08$}          & $50.38$ \small{\transparent{0.5} $\pm 0.07$} & $50.40$ \small{\transparent{0.5} $\pm 0.08$}          & 0.58    \\
                                        & FedAvgM  & $49.97$ \small{\transparent{0.5} $\pm 0.09$}     & $50.29$ \small{\transparent{0.5} $\pm 0.01$}  & $50.24$ \small{\transparent{0.5} $\pm 0.01$} & $50.43$ \small{\transparent{0.5} $\pm 0.01$}          & $50.29$ \small{\transparent{0.5} $\pm 0.19$}          & $50.24$ \small{\transparent{0.5} $\pm 0.03$} & $\mathbf{50.60}$ \small{\transparent{0.5} $\pm 0.22$} & 0.63    \\
                                        & FedDyn   & $50.23$ \small{\transparent{0.5} $\pm 0.08$}     & $50.47$ \small{\transparent{0.5} $\pm 0.17$}  & $50.43$ \small{\transparent{0.5} $\pm 0.14$} & $50.55$ \small{\transparent{0.5} $\pm 0.02$}          & $50.64$ \small{\transparent{0.5} $\pm 0.13$}          & $50.49$ \small{\transparent{0.5} $\pm 0.19$} & $\mathbf{50.71}$ \small{\transparent{0.5} $\pm 0.11$} & 0.48    \\
                                        & Moon     & $50.09$ \small{\transparent{0.5} $\pm 0.08$}     & $50.35$ \small{\transparent{0.5} $\pm 0.02$}  & $50.35$ \small{\transparent{0.5} $\pm 0.16$} & $\mathbf{50.54}$ \small{\transparent{0.5} $\pm 0.03$} & $50.36$ \small{\transparent{0.5} $\pm 0.01$}          & $50.38$ \small{\transparent{0.5} $\pm 0.11$} & $50.52$ \small{\transparent{0.5} $\pm 0.22$}          & 0.45    \\
                                        & FedLC    & $49.89$ \small{\transparent{0.5} $\pm 0.19$}     & $50.43$ \small{\transparent{0.5} $\pm 0.08$}  & $50.37$ \small{\transparent{0.5} $\pm 0.09$} & $50.46$ \small{\transparent{0.5} $\pm 0.05$}          & $50.34$ \small{\transparent{0.5} $\pm 0.05$}          & $50.29$ \small{\transparent{0.5} $\pm 0.07$} & $\mathbf{50.50}$ \small{\transparent{0.5} $\pm 0.05$} & 0.61    \\
            \midrule
            \multirow{5}{*}{\makecell{CIFAR10                                                                                                                                                                                                                                                                                                                                                                                         \\ (ResNet18)}}
                                        & FedAvg   & $42.24$ \small{\transparent{0.5} $\pm 2.18$}     & $44.60$ \small{\transparent{0.5} $\pm 0.74$}  & $44.10$ \small{\transparent{0.5} $\pm 0.20$} & $\mathbf{59.29}$ \small{\transparent{0.5} $\pm 2.58$} & $45.56$ \small{\transparent{0.5} $\pm 0.18$}          & $46.49$ \small{\transparent{0.5} $\pm 0.08$} & $58.28$ \small{\transparent{0.5} $\pm 4.95$}          & 17.05   \\
                                        & FedAvgM  & $42.56$ \small{\transparent{0.5} $\pm 2.23$}     & $45.81$ \small{\transparent{0.5} $\pm 1.36$}  & $45.05$ \small{\transparent{0.5} $\pm 1.24$} & $63.48$ \small{\transparent{0.5} $\pm 2.16$}          & $46.55$ \small{\transparent{0.5} $\pm 0.83$}          & $46.24$ \small{\transparent{0.5} $\pm 1.36$} & $\mathbf{69.37}$ \small{\transparent{0.5} $\pm 4.49$} & 26.81   \\
                                        & FedDyn   & $37.22$ \small{\transparent{0.5} $\pm 3.26$}     & $39.49$ \small{\transparent{0.5} $\pm 0.01$}  & $39.45$ \small{\transparent{0.5} $\pm 0.20$} & $69.10$ \small{\transparent{0.5} $\pm 1.17$}          & $39.42$ \small{\transparent{0.5} $\pm 0.08$}          & $39.84$ \small{\transparent{0.5} $\pm 0.16$} & $\mathbf{70.59}$ \small{\transparent{0.5} $\pm 3.86$} & 33.37   \\
                                        & Moon     & $41.12$ \small{\transparent{0.5} $\pm 1.23$}     & $44.28$ \small{\transparent{0.5} $\pm 0.45$}  & $43.79$ \small{\transparent{0.5} $\pm 0.43$} & $60.26$ \small{\transparent{0.5} $\pm 3.29$}          & $45.20$ \small{\transparent{0.5} $\pm 0.36$}          & $44.85$ \small{\transparent{0.5} $\pm 1.22$} & $\mathbf{65.55}$ \small{\transparent{0.5} $\pm 0.27$} & 24.43   \\
                                        & FedLC    & $29.31$ \small{\transparent{0.5} $\pm 0.01$}     & $29.62$ \small{\transparent{0.5} $\pm 0.13$}  & $30.65$ \small{\transparent{0.5} $\pm 0.29$} & $\mathbf{42.20}$ \small{\transparent{0.5} $\pm 2.14$} & $31.04$ \small{\transparent{0.5} $\pm 0.98$}          & $30.37$ \small{\transparent{0.5} $\pm 0.82$} & $40.27$ \small{\transparent{0.5} $\pm 1.00$}          & 12.89   \\
            \midrule
            \multirow{6}{*}{\makecell{DomainNet                                                                                                                                                                                                                                                                                                                                                                                       \\ (MobileNet V2)}}
                                        & FedAvg   & $46.31$ \small{\transparent{0.5} $\pm 1.36$}     & $50.78$ \small{\transparent{0.5} $\pm 1.42$}  & $53.83$ \small{\transparent{0.5} $\pm 0.37$} & $56.43$ \small{\transparent{0.5} $\pm 3.08$}          & $52.37$ \small{\transparent{0.5} $\pm 0.59$}          & $54.67$ \small{\transparent{0.5} $\pm 0.77$} & $\mathbf{57.43}$ \small{\transparent{0.5} $\pm 0.13$} & $11.12$ \\
                                        & FedAvgM  & $45.50$ \small{\transparent{0.5} $\pm 1.21$}     & $50.61$ \small{\transparent{0.5} $\pm 1.73$}  & $55.70$ \small{\transparent{0.5} $\pm 0.69$} & $\mathbf{58.34}$ \small{\transparent{0.5} $\pm 0.01$} & $53.50$ \small{\transparent{0.5} $\pm 2.33$}          & $53.56$ \small{\transparent{0.5} $\pm 0.81$} & $57.44$ \small{\transparent{0.5} $\pm 1.04$}          & $12.84$ \\
                                        & FedDyn   & $45.41$ \small{\transparent{0.5} $\pm 0.89$}     & $47.24$ \small{\transparent{0.5} $\pm 0.29$}  & $49.90$ \small{\transparent{0.5} $\pm 0.34$} & $\mathbf{55.53}$ \small{\transparent{0.5} $\pm 1.49$} & $52.42$ \small{\transparent{0.5} $\pm 0.23$}          & $50.68$ \small{\transparent{0.5} $\pm 0.26$} & $53.33$ \small{\transparent{0.5} $\pm 0.26$}          & $10.12$ \\
                                        & Moon     & $50.56$ \small{\transparent{0.5} $\pm 0.89$}     & $59.39$ \small{\transparent{0.5} $\pm 0.47$}  & $59.54$ \small{\transparent{0.5} $\pm 0.44$} & $57.03$ \small{\transparent{0.5} $\pm 0.60$}          & $\mathbf{60.48}$ \small{\transparent{0.5} $\pm 0.10$} & $59.93$ \small{\transparent{0.5} $\pm 0.41$} & $57.50$ \small{\transparent{0.5} $\pm 0.52$}          & $9.92$  \\
                                        & FedLC    & $45.48$ \small{\transparent{0.5} $\pm 3.59$}     & $50.40$ \small{\transparent{0.5} $\pm 0.43$}  & $51.27$ \small{\transparent{0.5} $\pm 0.66$} & $\mathbf{57.92}$ \small{\transparent{0.5} $\pm 0.67$} & $54.60$ \small{\transparent{0.5} $\pm 1.45$}          & $56.34$ \small{\transparent{0.5} $\pm 2.78$} & $57.41$ \small{\transparent{0.5} $\pm 0.06$}          & $12.44$ \\
                                        & FedIIR   & $49.32$ \small{\transparent{0.5} $\pm 0.84$}     & $48.11$ \small{\transparent{0.5} $\pm 0.18$}  & $50.28$ \small{\transparent{0.5} $\pm 1.10$} & $52.74$ \small{\transparent{0.5} $\pm 1.07$}          & $\mathbf{57.05}$ \small{\transparent{0.5} $\pm 1.84$} & $53.74$ \small{\transparent{0.5} $\pm 0.27$} & $51.86$ \small{\transparent{0.5} $\pm 1.08$}          & $7.73$  \\
            \bottomrule
        \end{tabular}
    }
    \label{tab:main-results}
\end{table*}

\begin{figure}[!t]
    \centering
    \subfigure[ \algfed $_\textsc{Federated}$]{
        \includegraphics[width=.225\textwidth]{./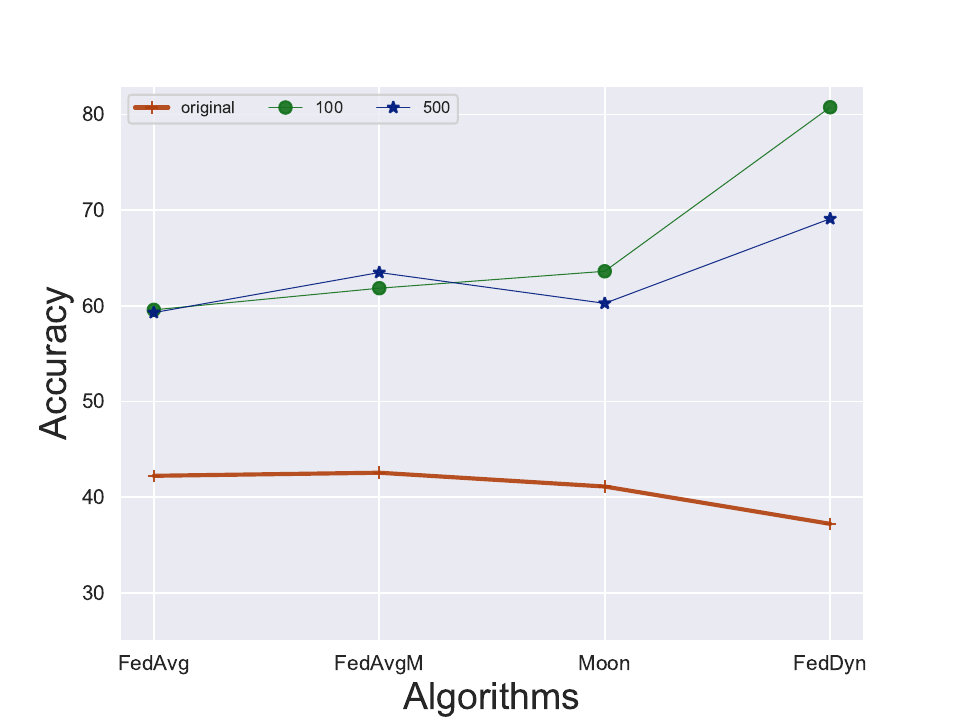}
        \label{fig:ablation-epochs-federated}
    }
    \subfigure[ \algfed$ _\textsc{Global}$]{
        \includegraphics[width=.225\textwidth]{./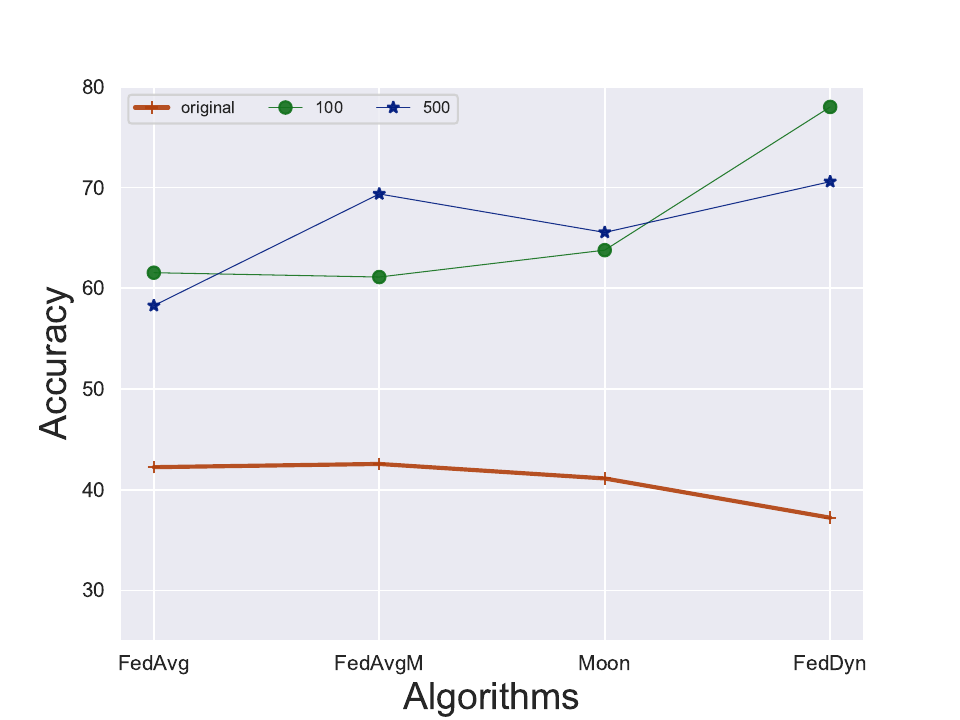}
        \label{fig:ablation-epochs-global}
    }
    \subfigure[CIFAR10]{
        \includegraphics[width=.225\textwidth]{./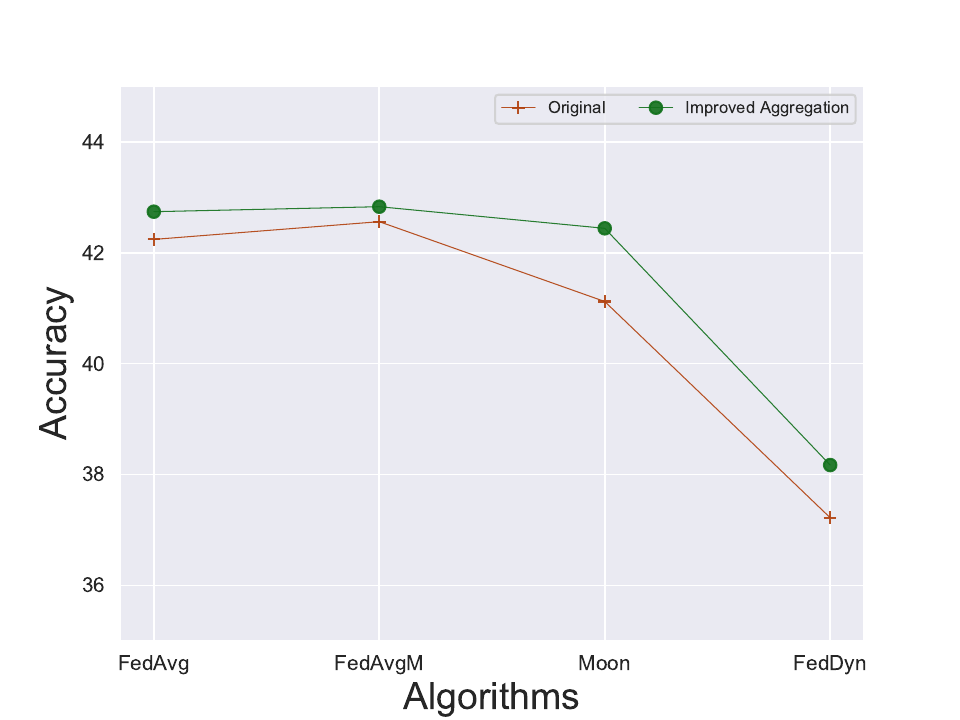} \label{fig:ablation-aggregation-cifar}
    }
    \subfigure[DomainNet]{
        \includegraphics[width=.225\textwidth]{./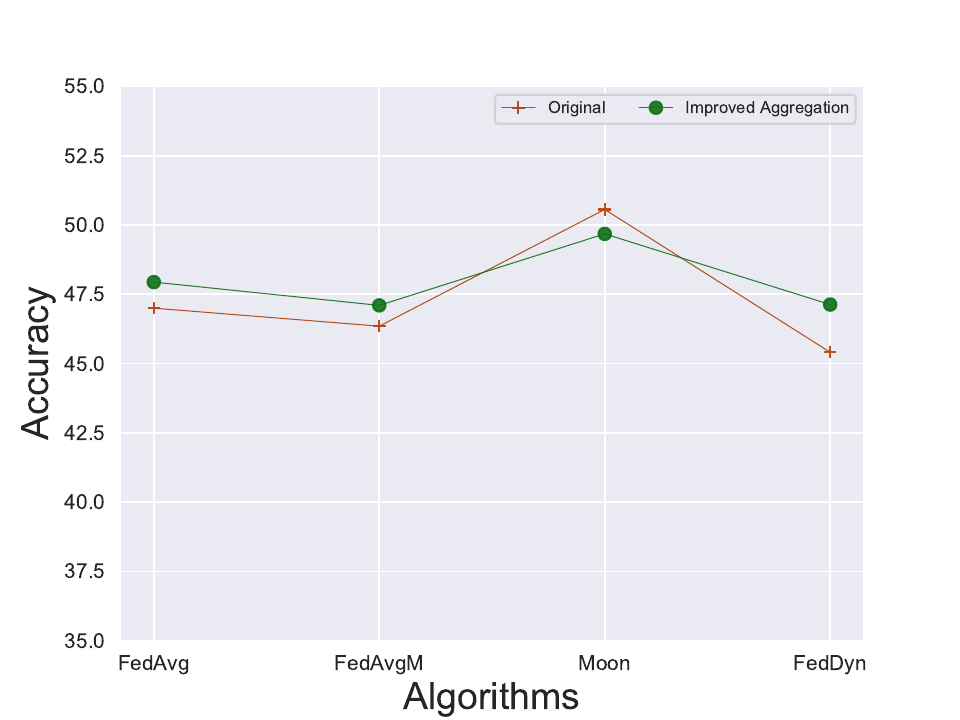} \label{fig:ablation-aggregation-domain}
    }
    \caption{\textbf{Ablation studies on the number of training epochs and improved model aggregation for \algfed.} 'Original' represents the algorithms in their original form, without enhancements, while other results consider all three case studies with varying epoch numbers. The Figures~\ref{fig:ablation-aggregation-cifar} and~\ref{fig:ablation-aggregation-domain} utilize client indices generated by the \textsc{Global} strategies. }
\end{figure}

\subsection{Numerical Results}

\paragraph{Superior performance of \algfed on all three case studies.}In Table~\ref{tab:main-results}, we evaluate \algfed's performance in three case studies: enhanced client sampling, improved model aggregation, and refined local training. The results reveal the following insights:
(1) \textit{Each case study shows performance improvements across all baselines}, highlighting the potential of generated client indices to enhance FL algorithms.
(2) \textit{Enhanced local training provides the most significant performance boost}, emphasizing the importance of refining local features for addressing distribution shifts. 
(3) \textit{The \textsc{Federated} strategy consistently matches the \textsc{Global} strategy in performance}, except for improved client sampling, where the \textsc{Global} strategy surpasses, showcasing its superior capability in assessing client similarities (see Figure~\ref{fig:index-similarity-heatmap}).
(4) The performance gain from improved model aggregation seems somewhat random compared to other case studies. This might be due to the shared intuition between improved client sampling and model aggregation, limiting further improvements when combining these approaches. However, \textit{solely using improved model aggregation consistently outperforms the original algorithms}, as seen in Figure~\ref{fig:ablation-aggregation-cifar} and~\ref{fig:ablation-aggregation-domain}.

\textit{In summary, utilizing the client indices significantly boost the model performance. In practice, we can select which case studies to use, and we recommend combining all three case studies as this approach provides a stable and significant performance gain compared to the baseline algorithms.}

\begin{table*}[!t]
    \centering
    \caption{\small
        \textbf{Ablation studies on \algfed.} We present naive baselines for \algfed. Feature Average: We compute the average of CLIP image features for all data samples within each client as the client index. Class Prototypes: We form each client's index by averaging CLIP image features for each class and concatenating the resulting class prototypes. Local Model Weights: We use the classifier model weights at the end of each training epoch as the client index for each client. Notably, unlike other methods, these local model weights change with each epoch, posing challenges for direct integration into our sampling and aggregation mechanisms.
    }
    \resizebox{1.\textwidth}{!}{
        \begin{tabular}{l c c c c c c c c c}
            \toprule
            CIFAR10 & Sampling & Sampling + Aggregation & Sampling + Aggregation + Local Training \\
            \midrule
            \textit{Ablation Study on Types of Client Index} \\
            Feature Average & 52.95 & 31.99 & 11.40\\
            Class Prototypes & 44.89 & 49.88 & 15.41\\
            Local Model Weights & - & - & 32.85\\
            \midrule
            \textit{Ablation Study on Generating Client Index} \\
            Omit Orthogonal Loss & 38.58 & 44.39 & 15.32\\
            Omit Text Align Loss & 44.91 & 43.18 & 54.35\\
            Omit Reconstruction Loss & 43.19 & 30.86 & 31.65\\
            \midrule
            \textit{Ablation Study on Label and Feature Index} \\
            Utilize Only Feature Index & 42.34 & 23.57 & 44.99 \\
            Utilize Only Label Index & 45.99 & 46.48\\
            \midrule
            \textit{Ablation Study on Local Training} \\
            Omit Orthogonal Regularization in Improved Local Training & - & - & 43.71\\
            \midrule
            \algfed & 45.56 & 46.49 & 58.28\\
            \bottomrule
        \end{tabular}%
    }
    \label{tab:ablation}
\end{table*}

\begin{figure}[!t]
    \centering
    \subfigure[\textsc{Federated}]{
        \includegraphics[width=.225\textwidth]{./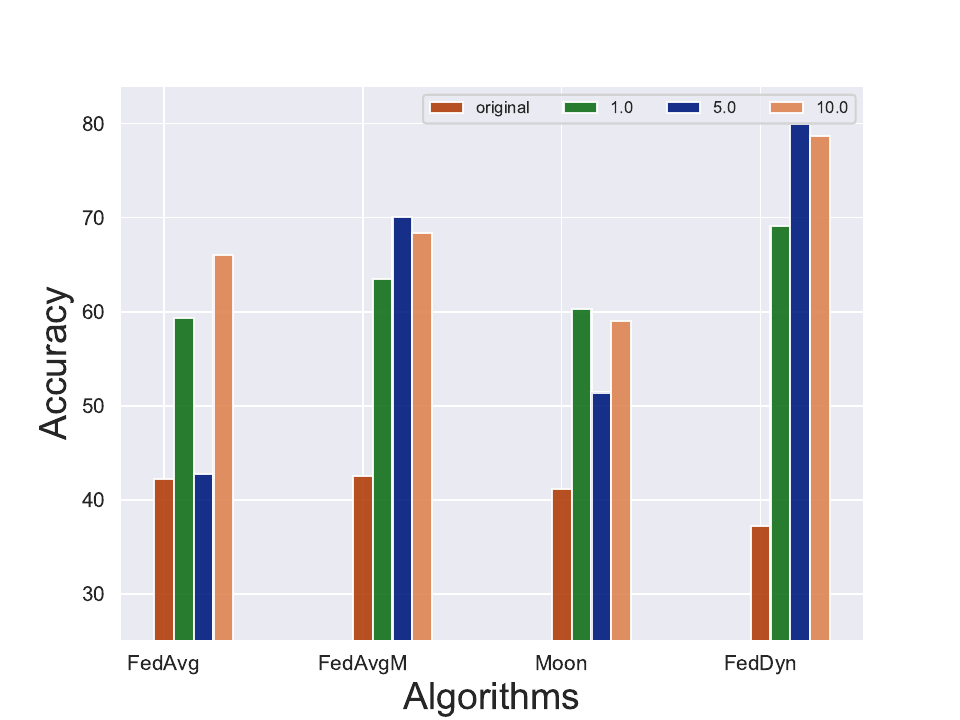}
        \label{fig:ablation-local-federated}
    }
    \subfigure[\textsc{Global}]{
        \includegraphics[width=.225\textwidth]{./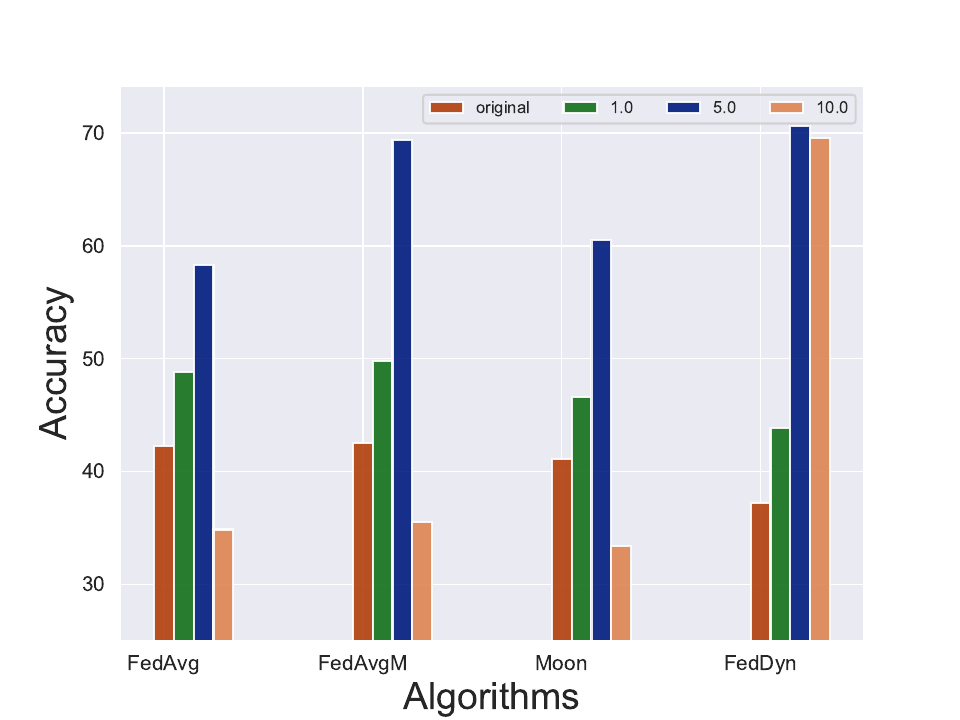}
        \label{fig:ablation-local-global}
    }
    \subfigure[\textsc{Federated}]{
        \includegraphics[width=.225\textwidth]{./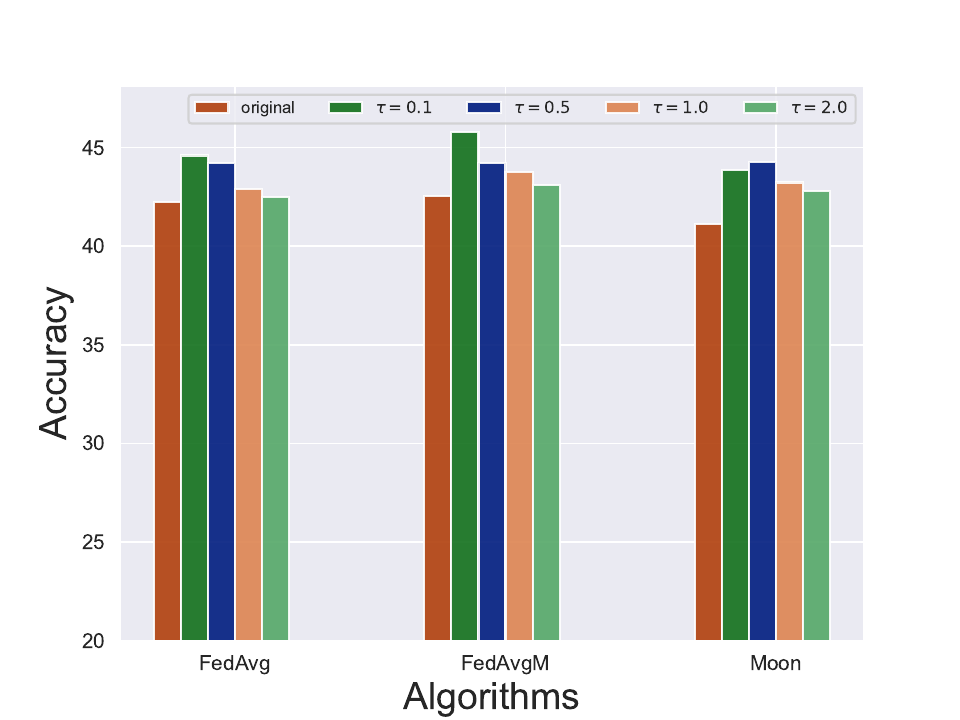}
        \label{fig:ablation-sampling-federated}
    }
    \subfigure[\textsc{Global}]{
        \includegraphics[width=.225\textwidth]{./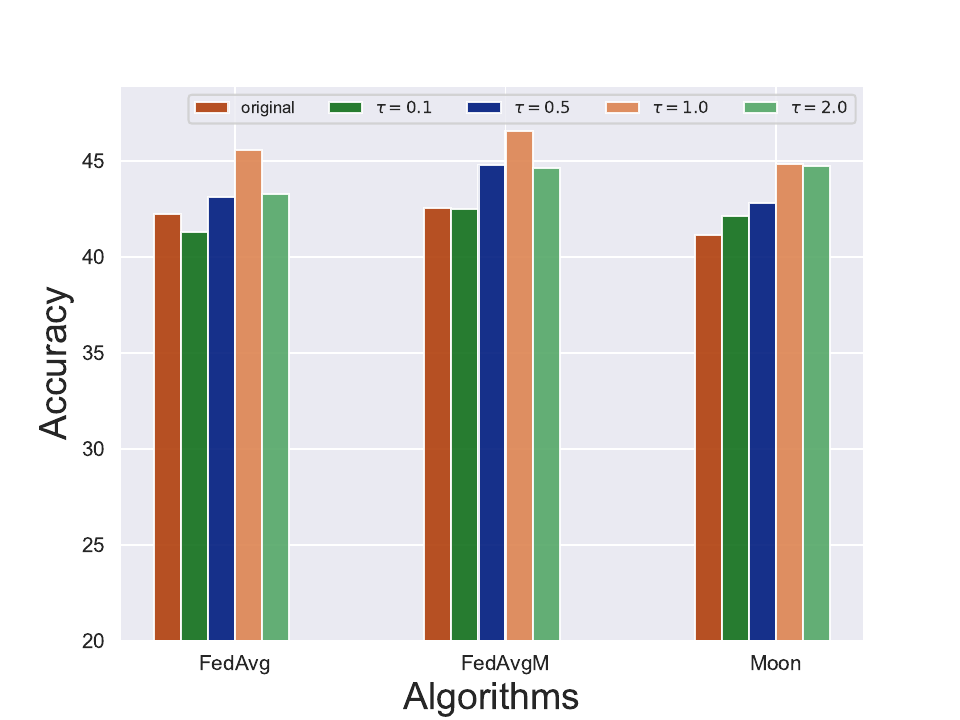}
        \label{fig:ablation-sampling-global}
    }
    \caption{\textbf{Ablation studies on improved local training and improved client sampling.} We use the CIFAR10 dataset and client indices from both \textsc{Federated} and \textsc{Global} strategies.  Figure~\ref{fig:ablation-local-federated} and~\ref{fig:ablation-local-global} different weights for Eq~\eqref{equ:local-reg}; Figure~\ref{fig:ablation-sampling-federated} and~\ref{fig:ablation-sampling-global} vary hyperparameter $\tau$ in Eq~\eqref{equ:sample-weights}. }
    \label{fig:ablation-local}
\end{figure}

\begin{table}[!t]
    \centering
    \caption{\textbf{Simulation time comparison.} We compare the simulation time of \algfed and FedAvg on DomainNet dataset.}
    \resizebox{1.\textwidth}{!}{
    \begin{tabular}{c c c c c}
        \toprule
         & Generate Client Index & Training Total & Training achieve FedAvg best performance & Total achieve FedAvg best performance \\
        \midrule
        \algfed & 632s & 24583s & 10817s & 11449s\\
        FedAvg & 0s & 26224s & 26224s & 26224s\\
        \bottomrule
    \end{tabular}
    }
    \label{tab:simulation-time}
\end{table}

\paragraph{All the components of \algfed are necessary.} 
In Table~\ref{tab:ablation}, we conduct ablation studies on \algfed. The results indicate that:
(1) \textit{The original \algfed achieves the highest final performance among different client vector candidates.} Although feature average and class prototypes show potential for aiding in sampling, they cannot be readily employed in our local training phase, which significantly contributes to the effectiveness of the \algfed algorithm.
(2) \textit{All three losses are essential for \algfed's effectiveness.} The orthogonal loss notably enhances the local training phase, while removing the text align loss and reconstruction loss significantly diminishes model performance.
(3) \textit{Both feature and label indices are vital for optimizing \algfed's performance.} The absence of the feature index hinders improved local training. Additionally, since the CIFAR10 dataset is partitioned using the Dirichlet method, introducing label shifts instead of feature shifts, the label index is crucial for enhancing sampling and model aggregation performance in this context.
(4) \textit{Orthogonal loss is crucial for achieving optimal performance in local training.} Without it, local training fails to surpass the performance of the original FedAvg.

\paragraph{Ablation studies on the number of training epochs for \algfed.}We conduct ablation studies on the \algfed algorithm, varying the number of training epochs as shown in Figure~\ref{fig:ablation-epochs-federated} and~\ref{fig:ablation-epochs-global}. The results indicate that: (1) Client indices generated with 100 or 500 training epochs notably enhance FL algorithm performance. (2) Increasing the number of training epochs for \algfed does not consistently lead to better results, as 100 epochs achieve similar performance to 500 epochs in most cases.

\paragraph{Ablation studies on hyper-parameters of improved client sampling.} In Figure~\ref{fig:ablation-sampling-federated} and~\ref{fig:ablation-sampling-global}, we perform ablation studies on the heat parameter $\tau$ in Eq~\eqref{equ:sample-weights}. The results indicate that (1) algorithms with improved client sampling consistently outperform the original algorithms across various $\tau$ values; (2) the optimal $\tau$ value is smaller for client indices trained using the \textsc{Federated} strategy compared to the \textsc{Global} strategy. This observation aligns with our previous findings that \textsc{Global} strategy-trained client indices exhibit larger inter-client distances.

\paragraph{Ablation studies on hyper-parameters of improved local training.}
In Figure~\ref{fig:ablation-local}, ablation studies on the weight of the local regularization term (Eq~\eqref{equ:local-reg}) were conducted. The findings suggest that: (1) Using weights of $1.0$ for the \textsc{Federated} strategy and $5.0$ for the \textsc{Global} strategy yields favorable results for all algorithms. (2) FedDyn exhibits higher resilience to changes in the weights of the local regularization terms.

\paragraph{Computation time comparison.} In Table~\ref{tab:simulation-time}, we show the simulation time of \algfed and FedAvg. Results show that (1) The additional computational overhead for generating the client index is relatively insignificant compared to the subsequent training stage; (2) From the ‘Total achieve FedAvg best performance’ column, Client2Vec requires less computational time to achieve comparable performance to FedAvg, particularly noticeable on larger-scale datasets such as DomainNet. 

\subsection{Ablation Studies on Various Model Architectures.}
In Table~\ref{tab:ablation-on-network-architecture}, we show how \algfed improves performance with different model architectures. Our results reveal that: (1) \algfed significantly boosts the performance of original algorithms in all settings, and (2) pre-trained models like MobileNet V2 and ResNet18 produce better results, while \algfed also enhances the performance of VIT models trained from scratch.\looseness=-1

\begin{table*}[!t]
    \centering
    \caption{\small
        \textbf{Performance of \algfed on various network architectures.} We evaluate the performance of \algfed on the DomainNet dataset using diverse network architectures. 
        The term 'Original' refers to the initial form of the algorithms, while \algfed(\textsc{Federated}) and \algfed(\textsc{Global}) applied all three case studies.
        Each experiment involves 100 communication rounds, with the number of local epochs set to 5. We gauge the average test accuracy of all clients in each communication round and report the highest performance achieved across all rounds. The results are averaged over three seeds. For the VIT experiments, we use the CCT-7/3x1 models~\citep{hassani2021escaping}.\looseness=-1
    }
    \resizebox{1.\textwidth}{!}{
        \begin{tabular}{c c c c c c c c c c}
            \toprule
            \multirow{3}{*}{DomainNet} & \multicolumn{3}{c}{MobileNet V2 (Pre-Trained)} & \multicolumn{3}{c}{ResNet18 (Pre-Trained)}   & \multicolumn{3}{c}{VIT (From Scratch)}                                                                                                                                                                                                                                                                                                 \\
            \cmidrule(lr){2-4} \cmidrule(lr){5-7} \cmidrule(lr){8-10} 

            & \multirow{2}{*}{Original} & \multicolumn{2}{c}{\algfed} & \multirow{2}{*}{Original} & \multicolumn{2}{c}{\algfed} & \multirow{2}{*}{Original} & \multicolumn{2}{c}{\algfed} \\
             \cmidrule(lr){3-4}  \cmidrule(lr){6-7}  \cmidrule(lr){9-10} & 
            
                                                                              & \textsc{Federated}                           & \textsc{Global}  &                                                       & \textsc{Federated}                           & \textsc{Global}   &                                                                 & \textsc{Federated}                           & \textsc{Global}                              \\
            \midrule
            FedAvg                     & $46.31$ \small{\transparent{0.5} $\pm 1.36$}   & $56.43$ \small{\transparent{0.5} $\pm 3.08$} & $57.43$ \small{\transparent{0.5} $\pm 0.13$} & $56.66$ \small{\transparent{0.5} $\pm 0.50$} & $61.27$ \small{\transparent{0.5} $\pm 0.05$} & $60.95$ \small{\transparent{0.5} $\pm 0.09$} & $33.09$ \small{\transparent{0.5} $\pm 0.01$} & $33.50$ \small{\transparent{0.5} $\pm 0.20$} & $33.86$ \small{\transparent{0.5} $\pm 0.02$} \\
            FedAvgM                    & $45.50$ \small{\transparent{0.5} $\pm 1.21$}   & $58.34$ \small{\transparent{0.5} $\pm 0.01$} & $57.44$ \small{\transparent{0.5} $\pm 1.04$} & $57.44$ \small{\transparent{0.5} $\pm 0.42$} & $61.22$ \small{\transparent{0.5} $\pm 0.11$} & $60.81$ \small{\transparent{0.5} $\pm 0.18$} & $33.67$ \small{\transparent{0.5} $\pm 0.56$} & $34.47$ \small{\transparent{0.5} $\pm 0.20$} & $34.21$ \small{\transparent{0.5} $\pm 0.11$} \\
            FedDyn                     & $45.41$ \small{\transparent{0.5} $\pm 0.89$}   & $51.49$ \small{\transparent{0.5} $\pm 0.17$} & $53.33$ \small{\transparent{0.5} $\pm 0.26$} & $58.17$ \small{\transparent{0.5} $\pm 0.61$} & $61.67$ \small{\transparent{0.5} $\pm 0.42$} & $59.88$ \small{\transparent{0.5} $\pm 0.42$} & $29.57$ \small{\transparent{0.5} $\pm 0.40$} & $31.64$ \small{\transparent{0.5} $\pm 0.13$} & $31.36$ \small{\transparent{0.5} $\pm 0.12$} \\
            MOON                       & $50.56$ \small{\transparent{0.5} $\pm 0.89$}   & $57.03$ \small{\transparent{0.5} $\pm 0.60$} & $57.50$ \small{\transparent{0.5} $\pm 0.52$} & $53.80$ \small{\transparent{0.5} $\pm 0.46$} & $60.76$ \small{\transparent{0.5} $\pm 0.25$} & $59.90$ \small{\transparent{0.5} $\pm 0.17$} & $32.29$ \small{\transparent{0.5} $\pm 0.52$} & $33.58$ \small{\transparent{0.5} $\pm 0.12$} & $33.73$ \small{\transparent{0.5} $\pm 0.03$} \\
            \bottomrule
        \end{tabular}
    }
    \label{tab:ablation-on-network-architecture}
\end{table*}

\section{Conclusion and Future Works}
\label{sec:conclusion}
In this paper, we explore the potential of enhancing FL algorithm performance through client index vectors. Our three case studies clearly demonstrate the significant improvement in FL algorithm performance achieved through client indices, highlighting client indexing as a valuable avenue for FL algorithm enhancement. It's important to note that these case studies may not cover all FL training scenarios. Investigating the impact of client indices on other aspects, such as personalization and clustering, would be valuable.

\newpage

\clearpage

\bibliography{paper}
\bibliographystyle{configuration/iclr2022_conference}

\clearpage

\appendix

\onecolumn
{
    \hypersetup{linkcolor=black}
    \parskip=0em
    \renewcommand{\contentsname}{Contents of Appendix}
    \tableofcontents
    \addtocontents{toc}{\protect\setcounter{tocdepth}{3}}
}

\section{Proof of Aggregation Weights}

\label{sec:proof-aggregation-weights}

\begin{theorem}[Aggregation weights] Define the following objective function
\begin{align}
   \max_{p_{i,g}^{t}} \cL_{agg} =  \sum_{i \in \cS^{t}} p_{i,g}^{t} \left( \sum_{\tau=1}^{t} \gamma^{t - \tau} \text{S} (\beta_i, \cS^{\tau}) \right)
   + \lambda_1 \sum_{i \in \cS^{t}} p_{i,g}^{t} \log \frac{q_i^{t}}{p_{i,g}^{t}}
  + \lambda_0 (\sum_{i \in \cS^{t}} p_{i,g}^{t} - 1) \, ,
\end{align}
where $p_{i,g}^{t}$ is the aggregation weights on communication round $t$, $S$ is the similarity function, and $q_i^{t}$ is a prior distribution. Solving this optimization problem, the optimal $p_{i,g}^{t}$ is given by
\begin{align}
        p_{i,g}^{t} = \frac{q_i^{t} \exp \left( \frac{1}{\lambda_1} \sum_{\tau=1}^{t} \gamma^{t - \tau} \text{dist} (\beta_i, \cS^{\tau})
        \right)}{\sum_{j \in \cS^{t}} q_j^{t} \exp \left( \frac{1}{\lambda_1} \sum_{\tau=1}^{t} \gamma^{t - \tau} \text{dist} (\beta_j, \cS^{\tau}) \right)} \, .
    \end{align}   
\end{theorem}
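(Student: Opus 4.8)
The plan is to treat this as a standard concave-maximization problem solved by first-order (Lagrangian) stationarity. First I would fix the round $t$ and abbreviate the profit coefficient of each client by $c_i := \sum_{\tau=1}^t \gamma^{t-\tau}\,\text{S}(\beta_i,\cS^\tau)$, so that the objective reads $\cL_{agg}=\sum_{i\in\cS^t} p_{i,g}^t c_i + \lambda_1\sum_{i\in\cS^t} p_{i,g}^t\log(q_i^t/p_{i,g}^t)+\lambda_0(\sum_{i\in\cS^t} p_{i,g}^t-1)$, where $\lambda_0$ is the multiplier enforcing the simplex constraint and $\lambda_1>0$ is the fixed heat parameter.

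Before differentiating I would record that $\cL_{agg}$ is concave in the vector $(p_{i,g}^t)_{i\in\cS^t}$: the profit term $A_1$ and the constraint term $A_3$ are linear, while each entropy summand $\lambda_1 p\log(q_i^t/p)=\lambda_1\big(p\log q_i^t - p\log p\big)$ is concave because $-p\log p$ is concave and $\lambda_1>0$. Concavity guarantees that any feasible stationary point of the Lagrangian is the global maximizer, so it suffices to solve the first-order conditions rather than to verify a second-order test separately.

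Next I would compute $\partial \cL_{agg}/\partial p_{i,g}^t$. The only non-routine piece is the entropy term: since $\tfrac{d}{dp}\big(p\log(q/p)\big)=\log(q/p)-1$, the stationarity condition becomes $c_i+\lambda_1\big(\log(q_i^t/p_{i,g}^t)-1\big)+\lambda_0=0$. Solving for $p_{i,g}^t$ yields $p_{i,g}^t=q_i^t\exp(c_i/\lambda_1)\cdot\exp\!\big((\lambda_0-\lambda_1)/\lambda_1\big)$, i.e. exactly the claimed Gibbs form up to a single client-independent factor coming from $\lambda_0$.

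Finally I would pin down that factor by the normalization $\sum_{i\in\cS^t}p_{i,g}^t=1$: the common factor $\exp((\lambda_0-\lambda_1)/\lambda_1)$ must equal $1/\sum_{j\in\cS^t}q_j^t\exp(c_j/\lambda_1)$, which simultaneously fixes $\lambda_0$ as a constant and cancels it from the ratio, giving the stated softmax-over-prior expression after substituting $c_i$ back. The argument is almost entirely mechanical; the only points that genuinely require care are the derivative of $p\log(q/p)$ (which contributes the extra $-1$ that is absorbed into the $\lambda_0$ constant) and the observation that $\lambda_0$ is \emph{tuning-free} precisely because normalization determines it and it disappears from the final weights — the remark I would state explicitly to match the claim made in the main text.
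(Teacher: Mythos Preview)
Your proposal is correct and follows essentially the same approach as the paper: set the partial derivative $\partial \cL_{agg}/\partial p_{i,g}^t$ to zero, solve for $p_{i,g}^t$ up to the $\lambda_0$-dependent factor, and then eliminate that factor via the normalization $\sum_{i\in\cS^t}p_{i,g}^t=1$. Your explicit concavity argument is a small addition the paper omits, but the route is otherwise identical.
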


\begin{proof}
    Taking the derivation, we have
    \begin{align}
        \derive{\cL_{agg}}{p_{i,g}^{t}} = \sum_{\tau=1}^{t} \gamma^{t - \tau} \text{dist} (\beta_i, \cS^{\tau})
        + \lambda_1 \left( \log q_i^{t}  - \log p_{i,g}^{t} - 1  \right) + \lambda_0 \, ,
    \end{align}
    then we have
    \begin{align}
        p_{i,g}^{t} = \exp \left(
        \frac{1}{\lambda_1} \sum_{\tau=1}^{t} \gamma^{t - \tau} \text{dist} (\beta_i, \cS^{\tau})
        + \log q_i^{t} - 1 + \frac{\lambda_0}{\lambda_1}
        \right) \label{equ:agg-weight-val}\, .
    \end{align}
    Because $\sum_{i \in \cS^{t}} p_{i,g}^{t} = 1$, we have
    \begin{align}
        1 - \frac{\lambda_0}{\lambda_1} & = \log \left(
        \sum_{i \in \cS^{t}} \exp \left( \frac{1}{\lambda_1} \sum_{\tau=1}^{t} \gamma^{t - \tau} \text{dist} (\beta_i, \cS^{\tau})
        + \log q_i^{t} \right) \right)                  \\
                                        & = \log \left(
        \sum_{i \in \cS^{t}} q_i^{t} \exp \left( \frac{1}{\lambda_1} \sum_{\tau=1}^{t} \gamma^{t - \tau} \text{dist} (\beta_i, \cS^{\tau}) \right)
        \right) \label{equ:lam-val}\, ,
    \end{align}
    Then combine Equations~\eqref{equ:agg-weight-val} and~\eqref{equ:lam-val} we have
    \begin{align}
        p_{i,g}^{t} = \frac{q_i^{t} \exp \left( \frac{1}{\lambda_1} \sum_{\tau=1}^{t} \gamma^{t - \tau} \text{dist} (\beta_i, \cS^{\tau})
        \right)}{\sum_{j \in \cS^{t}} q_j^{t} \exp \left( \frac{1}{\lambda_1} \sum_{\tau=1}^{t} \gamma^{t - \tau} \text{dist} (\beta_j, \cS^{\tau}) \right)}
    \end{align}
\end{proof}

\section{Related Works}
\label{sec:related-works-appendix}

\paragraph{Distribution shifts in FL.} Federated Learning (FL) is introduced as a methodology for training machine learning models in a distributed manner, wherein local data is retained and not exchanged between the central server and individual clients. FedAvg~\citep{mcmahan2016communication,lin2020dont}, serving as a foundational algorithm in this domain, advocates the use of local Stochastic Gradient Descent (local SGD) to alleviate the communication burden. Nevertheless, the performance of FL algorithms is substantially impeded by distribution shifts among clients. Addressing these local distribution shifts has emerged as a primary focus in FL research~\citep{li2018federated,karimireddy2020scaffold,karimireddy2020mime,guo2023fedbr,jiang2023test}. Many existing works address label distribution shifts by incorporating additional regularization terms~\citep{li2018federated, karimireddy2020scaffold, guo2021towards, lee2022preservation, mendieta2022local}, enhancing feature learning~\citep{tang2022virtual, shi2023towards, li2021model, zhou2023fedfa}, and improving classifiers~\citep{luo2021no, li2023no}. Regarding feature distribution shifts, the majority of FL methods concentrate on the out-of-domain generalization problem. This objective aims to train robust models capable of generalizing to previously unseen feature distributions~\citep{nguyen2022fedsr, li2023adversarial, guo2023out}. Approaches include investigating special cases~\citep{reisizadeh2020robust}, integrating domain generalization algorithms in FL scenarios, such as domain-robust optimization~\citep{mohri2019agnostic, deng2021distributionally}, and training domain-invariant features~\citep{peng2019federated, wang2022framework, shen2021fedmm, sun2022multi, gan2021fruda}. Notably, recent research has also considered concept shifts by leveraging clustering methods~\citep{jothimurugesan2022federated, guo2023fedconceptem, guo2023find}.
In this study, we address the challenge of distribution shifts in FL from another perspective---enhancing the performance of FL algorithms prior to the training stage. Our approach holds the potential for seamless integration with the aforementioned algorithms, and consider both feature and label distribution shifts.

\paragraph{Information sharing in FL.} Various methods have been developed to address the challenge of distribution shifts among clients~\citep{zhao2018federated,jeong2018communication,long2021g}. One approach involves the sharing of information among clients, such as the exchange of local distribution statistics~\citep{shin2020xor,zhou2022fedfa}, data representations~\citep{hao2021towards,tan2022fedproto}, and prediction logits~\citep{chang2019cronus,luo2021no}. Additionally, techniques leveraging global proxy datasets have been introduced to enhance FL training~\citep{lin2020ensemble,duan2019astraea}.
Notably, FedMix~\citep{yoon2020fedmix} and FedBR~\citep{guo2023fedbr} generate privacy-protected augmentation data by averaging local batches, subsequently improving the local training process. VHL~\citep{tang2022virtual} employs randomly initialized generative models to produce virtual data, compelling local features to closely align with those of same-class virtual data. FedFed~\citep{yang2023fedfed} proposes a dataset distillation method, amalgamating distilled datasets into all clients' local datasets to mitigate distribution shifts. In comparison to existing approaches, \algfed presents several advantages: (1) the index generation process is decoupled from the FL training process, thereby avoiding any additional burden on FL training; (2) \algfed generates only one index vector per client, enhancing efficiency; (3) \algfed contributes to the whole FL training stage, encompassing client sampling, model aggregation, and local training processes.

\section{Preliminaries}

In this section, we present essential background information on the techniques and definitions employed in this paper to facilitate comprehension. 

\subsection{Domain Indexing}

The Domain Generalization (DG) tasks are designed to address the cross-domain generalization problem by generating domain-invariant features. Typically, DG methods aim to establish independence between a data point's latent representation and its domain identity, represented by a one-hot vector indicating the source domain~\citep{ganin2016domain, tzeng2017adversarial, zhao2017learning}. However, recent studies have demonstrated that utilizing a domain index, which is a real-value scalar (or vector) embedding domain semantics, as a substitute for domain identity, significantly enhances domain generalization performance~\citep{wang2020continuously, xu2021graph}.

For example, in the work by \citet{wang2020continuously}, sleeping stage prediction models were adapted across patients with varying ages, using "age" as the domain index. This approach yielded superior performance compared to traditional models that categorized patients into groups based on age, employing discrete group IDs as domain identities.

Nevertheless, obtaining domain indices may not always be feasible in practical scenarios. To overcome this challenge, \citet{xu2022domain} formally defined the domain index and introduced variational domain indexing (VDI) to infer domain indices without prior knowledge. The definition of the domain index in \cite{xu2022domain} is illustrated as follows.

\paragraph{Definition of domain index.} Consider the unsupervised domain adaptation setting involving a total of $N$ domains, each characterized by a domain identity $k \in \mathcal{K}=[N] \triangleq \{1, \ldots, N\}$. Here, $k$ belongs to either the source domain identity set $\mathcal{K}_s$ or the target domain identity set $\mathcal{K}_t$. Every domain $k$ comprises $D_k$ data points.
The task involves $n$ labeled data points $\left\{(\mathbf{x}i^s, y_i^s, k_i^s)\right\}_{i=1}^n$ originating from source domains $\left(k_i^s \in \mathcal{K}_s\right)$ and $m$ unlabeled data points $\left\{\mathbf{x}i^t, k_i^t\right\}_{i=1}^m$ from target domains $\left(k_i^t \in \mathcal{K}_t\right)$. The objectives are twofold: (1) predict the labels $\left\{y_i^t\right\}_{i=1}^m$ for the target domain data, and (2) deduce global domain indices $\boldsymbol{\beta}_k \in \mathbb{R}^{B_\beta}$ for each domain and local domain indices $\mathbf{u}_i \in \mathbb{R}^{B_u}$ for each data point.
It is important to note that each domain possesses a single global domain index but multiple local domain indices, with one corresponding to each data point in the domain.
The data encoding generated from an encoder that takes $\mathbf{x}$ as input is represented as $\mathbf{z} \in \mathbb{R}^{B_z}$. The mutual information is denoted by $I(\cdot ; \cdot)$.

\begin{definition}[Domain Index] 
\label{def:domain-idex}
Given data $\xx$ and label $y$, a domain-level variable $\mbeta$ and a data-level variable $\uu$ are called global and local domain indices, respectively, if there exists a data encoding $\zz$ such that the following holds:
\begin{itemize}
    \item Independence between $\mbeta$ and $\zz$: Global domain index $\beta$ is independent of data encoding $\zz$, i.e., $\mbeta \ind \zz$, or equivalently $I(\mbeta; \zz) = 0$. This is to encourage domain-invariant data encoding $\zz$.
    \item Information Preservation of $\zz$: Data encoding $\zz$, local domain index $\uu$, and global domain index $\mbeta$ preserves as much information on $\xx$ as possible, i.e., maximizing $I(\xx; \uu, \mbeta, \zz)$. This is to prevent $\mbeta$ and $\uu$ from collapsing to trivial solutions.
    \item Label Sensitivity of $\zz$: The data encoding $\zz$ should contain as much information on the label $y$ as possible to maximize prediction power, i.e., maximizing $I(y; \zz)$ conditioned on $\zz \ind \mbeta$. This is to make sure the previous two constraints on $\mbeta$, $\uu$, and $\zz$ do not harm prediction performance.
\end{itemize}
\end{definition}

In this paper, we extend the Definition~\ref{def:domain-idex} to Definition~\ref{def:client-index} by incorporating both client feature index and client label index. 

\subsection{CLIP}

CLIP~\citep{radford2021learning} is a cross-modal model that establishes a connection between vision and natural language by projecting image and text embeddings onto a shared space. When presented with an image $\mI$ and a corresponding descriptive sentence denoted as $\mT$, the CLIP image encoder and text encoder encode the image and text into image embedding $\mD$ and text embedding $\mL$, respectively. Subsequently, the embeddings $\mD$ and $\mL$ are aligned to achieve a large cosine similarity, thereby harmonizing the vision and language embedding spaces.

\section{Additional Experiment Results}
\label{sec:experiments-appendix}

\subsection{Workflow of \algfed on Language Datasets} 
\label{sec:language-workflow-appendix}

In Figure~\ref{fig:index-network-text}, we depict the workflow of \algfed on language datasets. The primary distinction between Figure~\ref{fig:index-network-overview} and Figure~\ref{fig:index-network-text} arises from the methods employed for encoding data and labels. Specifically, for language datasets, particularly in the context of the next character prediction task, the data is encoded as "The next character of \{data\}", while the label is encoded as "Character \{label\}". In both cases, the CLIP text encoder is utilized by both the data encoder and label encoder for this task.

\begin{figure}
    \centering
    \includegraphics[width=1.\textwidth]{./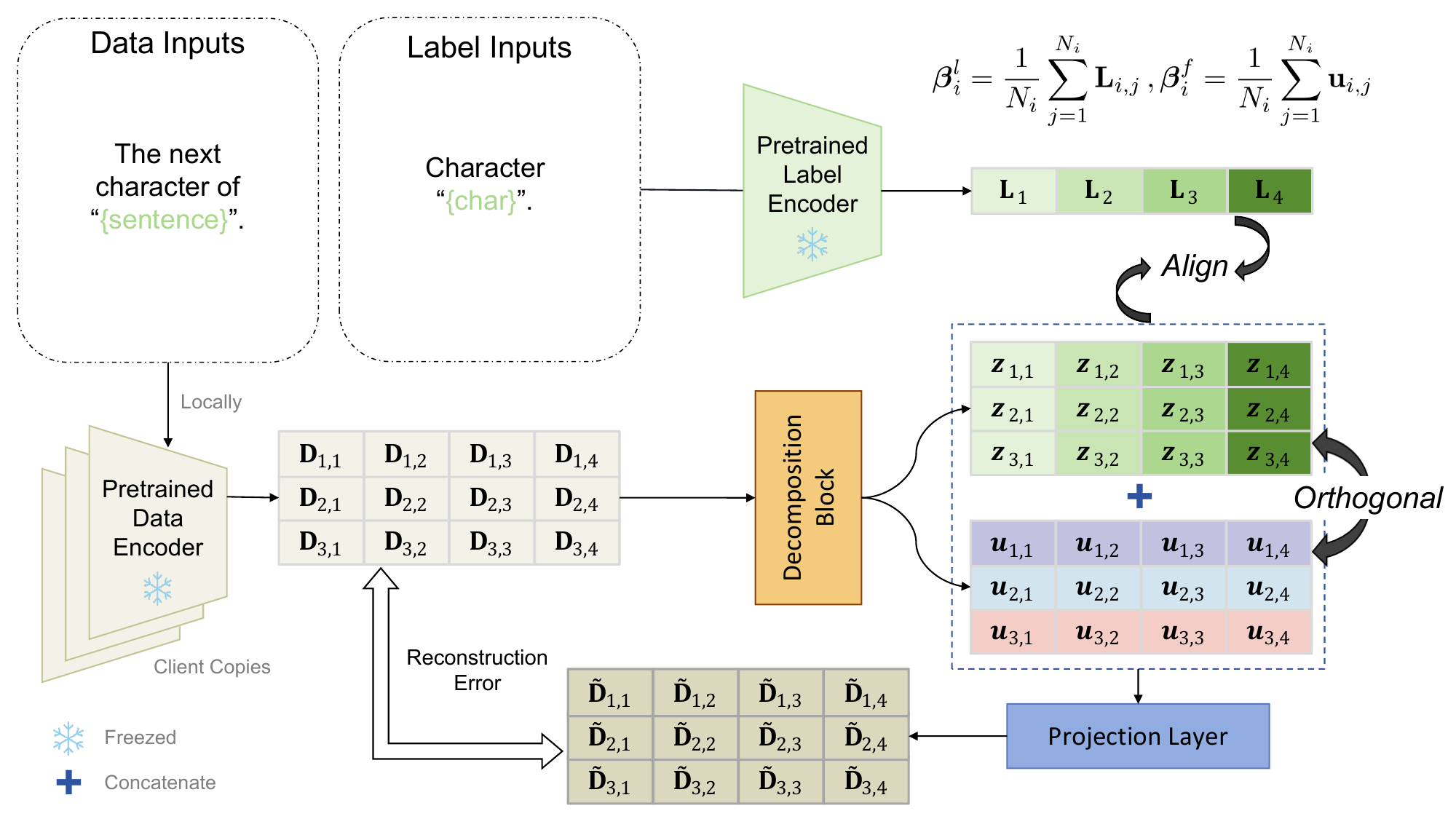}
    \caption{\textbf{Overview of the workflow of the \algfed on language datasets.} }
    \label{fig:index-network-text}
\end{figure}

\subsection{Experiment Settings}
\label{sec:experiment-settings-appendix}

\paragraph{Dataset partition.} The dataset partition follows the widely used settings in FL. In detail, we consider three datasets in this paper, and the details are listed as the follows.
\begin{itemize}[leftmargin=12pt,nosep]
    \item \textbf{Shakespeare:} The partition of Shakespeare dataset directly use the partition method provided by LEAF benchmark~\citep{caldas2018leaf}, and we set the fraction of data sample to 0.1, fraction of data in training set is set to 0.8, and minimum number of samples per user is set to 40.
    \item \textbf{CIFAR10:} We use the Latent Dirichlet Allocation (LDA)~\citep{yurochkin2019bayesian,hsu2019measuring} method with parameter $\alpha = 0.1$ to introduce label distribution shifts among clients. The dataset is partitioned into 100 clients.
    \item \textbf{DomainNet:} We randomly choose 50 classes from the overall 345 classes from DomainNet dataset. Sub-datasets of each domain are partitioned into 10 clients, resulting in 60 clients in total. Images are resized to $64 \times 64$.
\end{itemize}

\paragraph{Training details and hyper-parameters.} For every dataset and algorithm, we randomly select 10\% of clients in each communication round and execute a total of 100 communication rounds. We employ the SGD optimizer, with a momentum setting of 0.9 for the DomainNet dataset, and a weight decay set to 5e-5. The number of local epochs is fixed at 5, and the learning rate is set to 1e-2. The experiments are conduct on single NVIDIA 3090 GPU.
The hyperparameters for our enhanced case studies are detailed below.
\begin{itemize}[leftmargin=12pt, nosep]
    \item \textbf{Improved client sampling.} The heat parameter $\tau$ in Eq~\eqref{equ:sample-weights} is tuned in $[0.1, 0.5, 1.0, 2.0]$. 
    \item \textbf{Improved model aggregation.} We choose the optimal results by choosing $\gamma = [0.1, 0.5, 0.9]$, and set $\lambda_1 = 1.0$ by default in Eq~\eqref{equ:agg-weight-val}.
    \item \textbf{Improved local training.} For algorithms without extra local regularization terms, such as FedAvg, FedAvgM, and FedLC, the weights assigned to $\cL_{orth}$ and $\cL_{dist}$ are explicitly fixed at $1.0$. In contrast, for approaches incorporating additional local regularization terms, such as Moon, FedDyn, and FedIIR, the weights assigned to $\cL_{orth}$ and $\cL_{dist}$ are set equal to the respective values of those additional local regularization terms in the respective algorithms.
\end{itemize}%
The hyper-parameters utilized for each baseline algorithms are listed below.
\begin{itemize}[leftmargin=12pt, nosep]
    \item \textbf{FedAvgM:} The server momentum is tuned in $[0.1, 0.5, 1.0]$.
    \item \textbf{FedDyn:} We set $\alpha = 0.1$, and the max gradient norm to $10$.
    \item \textbf{Moon:} The heat parameter is set to $0.5$, and the weights of local regularization term is tuned in $[0.01, 0.1, 1.0]$.
    \item \textbf{FedLC:} We set $\tau = 1.0$.
    \item \textbf{FedIIR:} We tuned $ema = [0.95, 0.5, 0.1]$, and the weights of local regularization term are set to $1e-3$.
\end{itemize}

\paragraph{Model architectures and training details of \algnet.} The projection layer utilizes a three-layer transformer encoder. Each transformer encoder layer consists of 8 attention heads, with the model dimension set to 32, and the feed-forward layer dimension set to 2048. The projection layer is represented as a matrix with dimensions $1024 \times 512$. Given a batch of CLIP embeddings $\mD \in \R^{N \times 512}$, the input for the decomposition block is constructed as $\mI_{i,j} = [\mD, \mD] \in \R^{N \times 1024}$. Subsequently, $\mI$ is reshaped into $\tilde{\mI} = (N \times 32 \times 32)$, indicating that each sample comprises 32 patches, and each patch has a dimension of 32.

The reshaped $\tilde{\mI}$ is fed into the decomposition block, producing an output $\tilde{\mO} \in (N \times 32 \times 32)$, which is then reshaped to $\mO = (N \times 1024) = [\mZ, \mU]$. Here, $\mZ \in \R^{N \times 512}$ represents the data encoding $\zz$ as defined in Definition~\ref{def:client-index}, and $\mU \in \R^{N \times 512}$ corresponds to the sample feature index $\uu$.
The input to the projection layer is identical to the output of the decomposition block, represented as $\mO$.

\subsection{Ablation Studies on Client Index Generation}

\label{sec:without-diversity-loss}

\paragraph{Generating client index w/o the use of the diversity loss $\cL_{div}$.} As shown in Figure~\ref{fig:without-diversity-index}, the client feature index $\mbeta_i^{f}$ become close to identical when do not use the diversity loss. This result suggest the necessity of using the diversity loss to obtain the meaningful results.

\begin{figure}
    \centering
    \subfigure[Diversity loss, 500 epochs]{
        \includegraphics[width=.4\textwidth]{./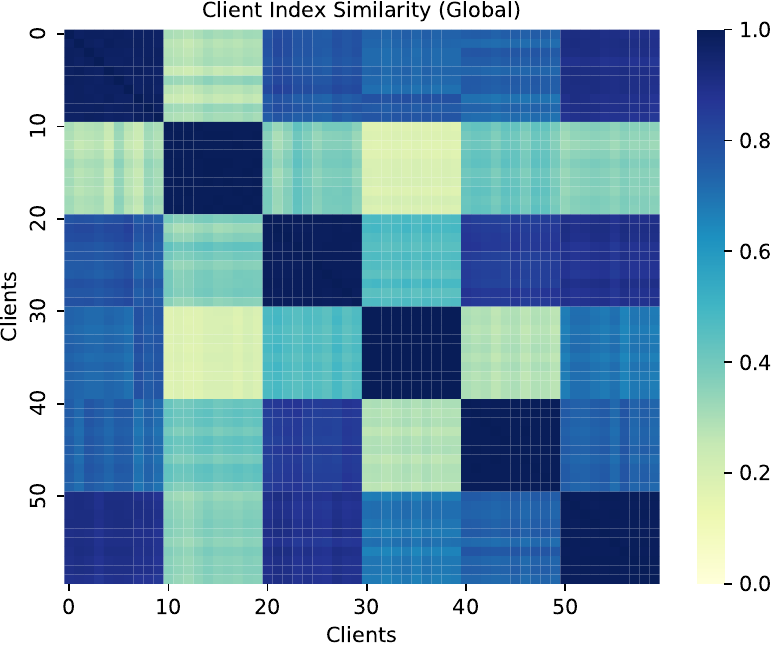}
    }
    \subfigure[Diversity loss, 1000 epochs]{
        \includegraphics[width=.4\textwidth]{./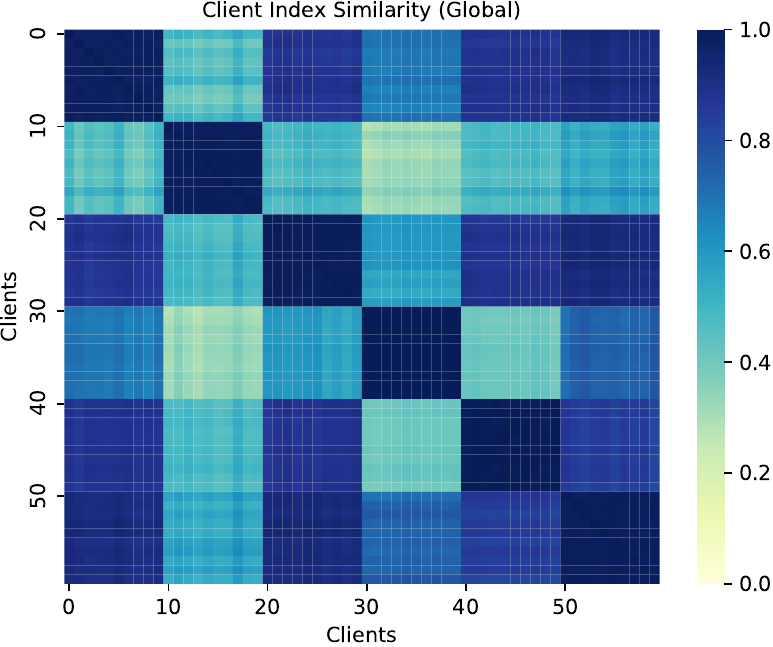}
    } \\
    \subfigure[Without Diversity loss, 500 epochs]{
        \includegraphics[width=.4\textwidth] {./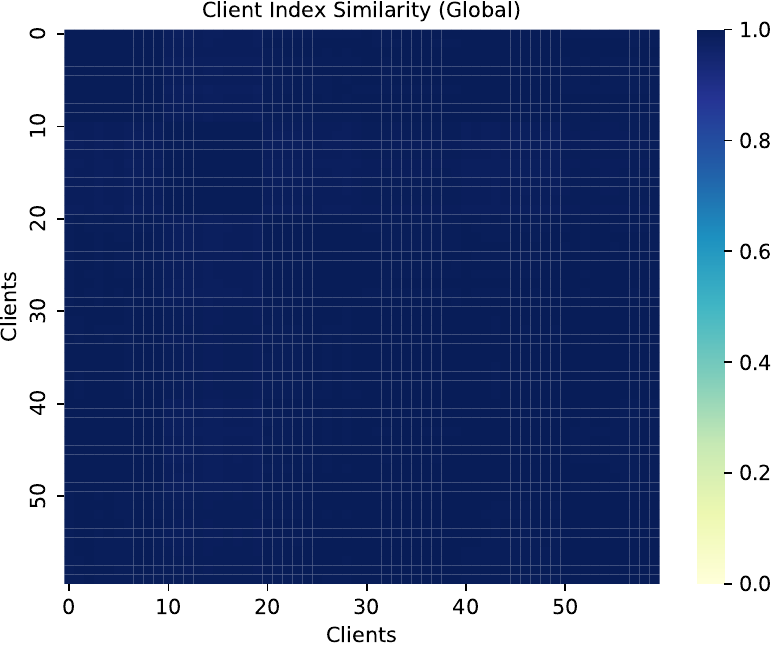}
    }
    \subfigure[Without Diversity loss, 1000 epochs]{
        \includegraphics[width=.4\textwidth]{./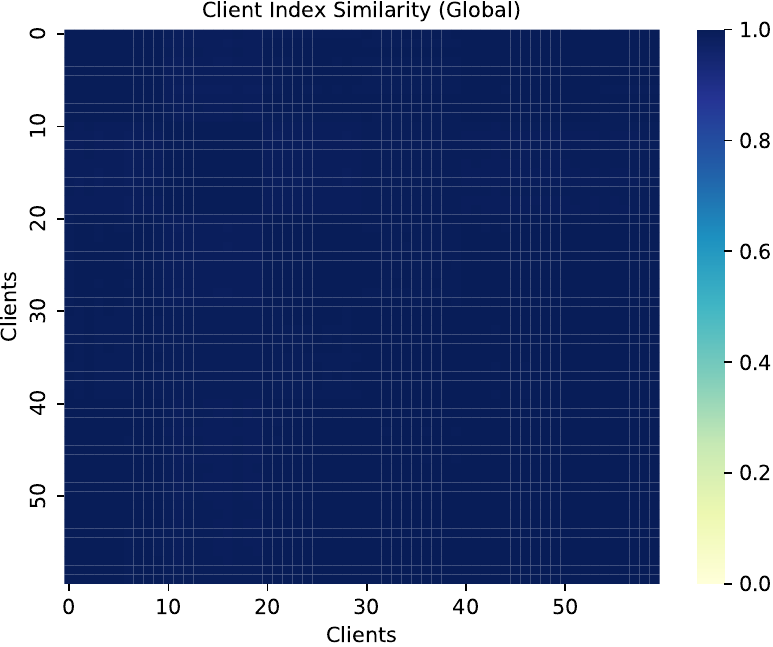}
    }
    \caption{\textbf{Comparison between client indexed generated with/without diversity loss.} We use the DomainNet dataset with 60 clients, and use the \textit{Global} training strategy. The \algnet is trained by 500 and 1000 global epochs. We resport the cos-similarities of the client feature index $\mbeta_i^{f}$.}
    \label{fig:without-diversity-index}
\end{figure}

\paragraph{Using different projection layers in \algnet.} In Figure~\ref{fig:mlp-index-decoder}, we use single Linear layer and two-layer MLP as projection layers in \algnet. Results show that both architectures can obtain sufficient meaningful results.

\begin{figure}
    \centering
    \subfigure[Linear Decoder]{
        \includegraphics[width=.4\textwidth]{./figures/heat-map-feature-div-500.pdf}
    }
    \subfigure[MLP Decoder]{
        \includegraphics[width=.4\textwidth]{./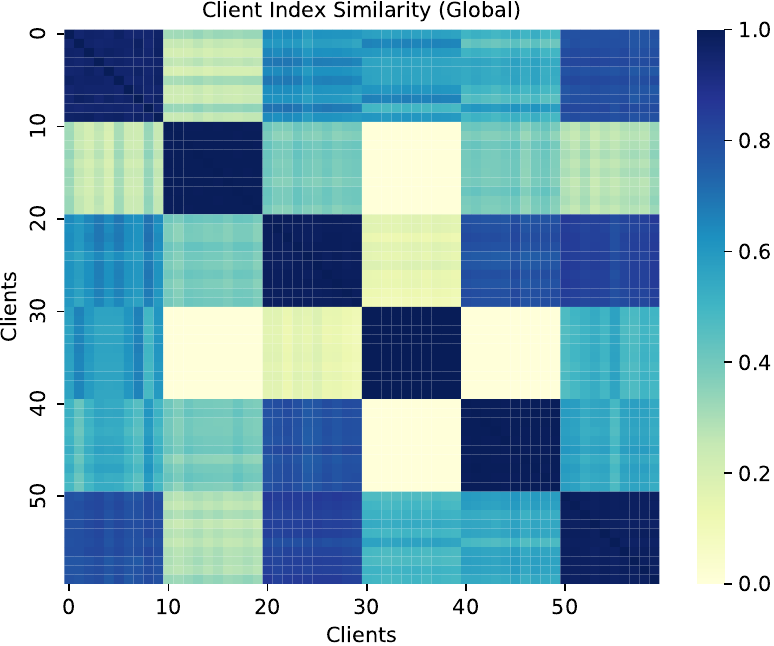}
    }
    \caption{\textbf{Comparison between client indexed generated using different projection layers.} We use the DomainNet dataset with 60 clients, and use the \textit{Global} training strategy. The \algnet is trained by 500 global epochs. We resport the cos-similarities of the client feature index $\mbeta_i^{f}$.}
    \label{fig:mlp-index-decoder}
\end{figure}

\subsection{Ablation Studies on Case Studies}

In Tables~\ref{tab:ablation-sampling}, ~\ref{tab:ablation-epochs}, and~\ref{tab:ablation-local}, we conduct ablation studies on the three case studies we introduced in Section~\ref{sec:case-studies}.

\begin{table}[!t]
    \centering
    \caption{\textbf{Ablation studies on improved client sampling.} We conduct ablation studies on hyper-parameter $\tau$ in Equation~\eqref{equ:sample-weights}. The term 'Original' refers to the algorithm in its initial form, where the improved client sampling is not applied. This ablation study focuses on improved client sampling, without integrating the other case studies involving enhanced model aggregation and improved local training.}
    \resizebox{1.\textwidth}{!}{
    \begin{tabular}{c c c c c c c c c c c}
    \toprule
    \multirow{2}{*}{CIFAR10} & Original & \multicolumn{4}{c}{\algfed (Federated)} & \multicolumn{4}{c}{\algfed (Global)}                                            \\
      \cmidrule(lr){2-2} \cmidrule(lr){3-6} \cmidrule(lr){7-10}
    & - & $\tau = 0.1$ & $\tau = 0.5$ & $\tau = 1.0$ & $\tau = 2.0$ & $\tau = 0.1$ & $\tau = 0.5$ & $\tau = 1.0$ & $\tau = 2.0$ \\
    \midrule
    FedAvg     & 42.24 & 44.60 & 44.21 & 42.88 & 42.49 & 41.28 & 43.10 & \textbf{45.56} & 43.28  \\
    FedAvgM    & 42.56 & 45.81 & 44.22 & 43.74 & 43.11 & 42.50 & 44.80 & \textbf{46.55} & 44.62  \\
    Moon & 41.12 & 43.86 & 44.28 & 43.23 & 42.82 & 42.15 & 42.80 & \textbf{44.85} & 44.74\\
    \bottomrule
    \end{tabular}
    }
    \label{tab:ablation-sampling}
\end{table}

\begin{table}[!t]
    \centering
    \caption{\textbf{Ablation studies on training epochs of \algfed.} We perform ablation studies on the training epochs of \algnet, incorporating all three case studies.}
    \begin{tabular}{c c c c c c c c c c c}
    \toprule
    \multirow{2}{*}{CIFAR10} & Original & \multicolumn{2}{c}{\algfed (Federated)} & \multicolumn{2}{c}{\algfed (Global)}                                            \\
      \cmidrule(lr){2-2} \cmidrule(lr){3-4} \cmidrule(lr){5-6}
    & - &  $E = 100$ & $E = 500$ & $E = 100$ & $E = 500$ \\
    \midrule
    FedAvg     & 42.24 & 59.58 & 59.29 & \textbf{61.55} & 58.28  \\
    FedAvgM    & 42.56 & 61.84 & 63.48 & 61.12 & \textbf{69.37}   \\
    Moon & 41.12 & 63.61 & 60.26 & 63.79 & \textbf{65.55} \\
    FedDyn & 37.22 & \textbf{80.75} & 69.10 & 78.01 & 70.59 \\
    \bottomrule
    \end{tabular}
    \label{tab:ablation-epochs}
\end{table}

\begin{table}[!t]
    \centering
    \caption{\textbf{Ablation studies on improved local training.} We conduct ablation studies on the weights of the improved local training. All three case studies are incorporated in this setting.}
    \begin{tabular}{c c c c c c c c c c c}
    \toprule
    \multirow{2}{*}{CIFAR10} & Original & \multicolumn{3}{c}{\algfed (Federated)} & \multicolumn{3}{c}{\algfed (Global)}                                            \\
      \cmidrule(lr){2-2} \cmidrule(lr){3-5} \cmidrule(lr){6-8}
    & - &  $1.0$ & $5.0$ & $10.0$ &  $1.0$ & $5.0$ & $10.0$ \\
    \midrule
    FedAvg     & 42.24 & 59.29 & 42.76 & \textbf{66.02} & 48.83 & 58.28 & 34.86  \\
    FedAvgM    & 42.56 & 63.48 & \textbf{70.04} & 68.34 & 49.77 & 69.37 & 35.51   \\
    Moon & 41.12 & 60.25 & 51.41 & 59.02 & 46.61 & \textbf{60.53} & 33.39 \\
    FedDyn & 37.22 & 69.10 & \textbf{79.96} & 78.70 & 43.87 & 70.59 & 69.57 \\
    \bottomrule
    \end{tabular}
    \label{tab:ablation-local}
\end{table}

\end{document}